\documentclass{article} % For LaTeX2e
\usepackage{iclr2027_conference,times}

\usepackage[utf8]{inputenc} % allow utf-8 input
\usepackage[T1]{fontenc}    % use 8-bit T1 fonts
\usepackage{url}            % simple URL typesetting
\usepackage{booktabs}       % professional-quality tables
\usepackage{amsfonts}       % blackboard math symbols
\usepackage{nicefrac}       % compact symbols for 1/2, etc.
\usepackage{microtype}      % microtypography
\usepackage[dvipsnames]{xcolor}
\usepackage{xfrac}          % fraction format
\usepackage{amsmath}\allowdisplaybreaks
\usepackage{bm}
\usepackage{amssymb}
\numberwithin{equation}{section}
\usepackage[colorinlistoftodos]{todonotes}
\usepackage{comment}
\usepackage{wrapfig}
\usepackage{tcolorbox}
\usepackage{enumitem}
\usepackage{graphicx}
\usepackage{subcaption}

\usepackage{amsthm}
\usepackage{float}
\usepackage{thmtools, thm-restate}
\usepackage{hyperref}
\usepackage[nameinlink,noabbrev]{cleveref}

\newtheorem{proposition}{Proposition}[section]
\newtheorem{definition}{Definition}[section]

\crefname{theorem}{theorem}{theorems}
\Crefname{theorem}{Theorem}{Theorems}
\crefname{proposition}{proposition}{propositions}
\Crefname{proposition}{Proposition}{Propositions}
\crefname{definition}{Definition}{Definitions}
\Crefname{definition}{Definition}{Definitions}
\crefname{assumption}{Assumption}{Assumptions}
\Crefname{assumption}{Assumption}{Assumptions}

\usepackage{pifont}% http://ctan.org/pkg/pifont
\definecolor{myblue}{RGB}{0,0,150}
\definecolor{myred}{RGB}{120,50,50}
\definecolor{mygray}{RGB}{90,90,110}

\hypersetup{
	colorlinks=true,
	linkcolor=myred,
	citecolor=myred,
	urlcolor=mygray,
}

\title{Interference Beyond Geometry\\ in Concept Extraction}

\author{%
  Valérie Costa\\
  University of Alberta, Amii\\
  Harvard University\\
  \And
  Bahareh Tolooshams\\
  University of Alberta, Amii\\
  Canada CIFAR AI Chair\\
}

\newcommand{\R}{\mathbb{R}} %real numbers
\newcommand{\E}{\mathbb{E}} %expectation

\newcommand{\W}{{\bm W}}

\newcommand{\A}{{\bm A}}

\newcommand{\D}{{\bm D}}

\newcommand{\dvec}{{\bm d}}
\newcommand{\gvec}{{\bm g}}

\newcommand{\eye}{{\bm I}}

\newcommand{\bvec}{{\bm b}}

\newcommand{\w}{{\bm w}}
\newcommand{\z}{{\bm z}}
\newcommand{\x}{{\bm x}}

\newcommand{\res}{{\bm r}}

\iclrfinalcopy % Uncomment for camera-ready version, but NOT for submission.
\begin{document}
\maketitle

%%%%%%%%%%%%%%%%%%%%%%%%%%%%%%%%%%
%%%%%%%%%%%%%%%%%%%%%%%%%%%%%%%%%%
%
\begin{abstract}
Interference is commonly treated as geometric overlap between learned features. We introduce \emph{effective interference}, which combines feature geometry and code statistics to capture realized interactions, distinguishing constructive from destructive interference and frequent weak interactions from rare strong ones. Under local fixed-support assumptions, we characterize how architectural constraints shape interference through four mechanisms: feature orthogonalization, bias compensation, gain adaptation, and encoder-decoder separation. Experiments with sparse autoencoders show that constrained architectures selectively reduce overlap among co-active features, while bias, gain, and encoder freedom allow constructive cross-contributions to remain. Together, these results show that interference in learned representations depends not only on feature geometry, but also on how features are used and on the architecture
that produces their codes.
\end{abstract}

%%%%%%%%%%%%%%%%%%%%%%%%%%%%%%%%%%
%%%%%%%%%%%%%%%%%%%%%%%%%%%%%%%%%%
%
\section{Introduction}\label{sec:intro}
Mechanistic interpretability~\citep{sharkey2025open,bereska2024mechanistic,conmy2023towards} seeks to understand the internal mechanisms of large neural networks. Central to this goal is understanding how structure in data is encoded within these models. Two complementary hypotheses have emerged as a foundation for this effort: the \emph{linear representation hypothesis}, in which concepts are represented as directions in activation space~\citep{mikolov-etal-2013-linguistic,park2024the,pmlr-v235-jiang24d,costa2025from}, and the \emph{superposition hypothesis}, which proposes that sparse representations allow neural networks to encode more concepts than the dimensionality of their hidden representations~\citep{elhage2022toymodelssuperposition,arora2018linear,klindt2025superposition}.

Under superposition, an overcomplete collection of concept directions cannot be mutually orthogonal and must overlap. This overlap is commonly called \emph{interference}: an inevitable cost of gaining representational capacity at the expense of feature separation~\citep{elhage2022toymodelssuperposition,bereska2024mechanistic,stevinson2026adversarial,adampoly}. Existing measures treat interference geometrically through pairwise cosine similarity~\citep{kong2026expand,gong2026signal} neuron capacity~\citep{adampoly}, or mutual coherence~\citep{tropp2004greed}, motivating methods that promote orthogonality among extracted concepts~\citep{korznikov2025ortsae,miller2026superpositioninterferenceisolatedinterventions,till_2024_saes_true_features, costa2025from}.

However, recent observations challenge this view: features with sparse or weakly correlated codes may overlap with little interference~\citep{adampoly,gurnee2023finding}, whereas correlated concepts may exploit overlap constructively~\citep{prieto2026from}. Overlap may also encode meaningful rather than incidental relationships~\citep{ivanov2026spectral,kantamneni2025language,gurnee2024language}, including cyclic structures such as days of the week and months of the year~\citep{engels2025not,stevinson2026adversarial}. Together, these observations suggest that a geometric measure of interference is incomplete without accounting for how concepts are used across the data.

Codes provide precisely this missing data-dependent information: unlike the concept dictionary, which is shared across inputs, a code specifies for each input which concepts are active and at what magnitude. Their \emph{support} captures when concepts co-activate and therefore which geometric overlaps are realized. In contrast, a pairwise geometric measure assigns the same weight to an overlap regardless of whether the corresponding concepts frequently co-activate or never do. Their \emph{magnitude}, in turn, captures how much a realized overlap contributes. Even when two concepts co-activate, the strength of their interaction depends on their activation magnitudes.

Interference is therefore fundamentally a joint property of concept geometry and code statistics (\Cref{fig:figinterference}). This distinction is particularly important for sparse autoencoders (SAEs) used in mechanistic interpretability~\citep{cunningham2023sparse,gao2025scaling,bussmann2024batchtopk,rajamanoharan2024jumping,zhu2026abstopk}: codes are not observed directly, but inferred from model activations by an encoder. How the encoder is parameterized, through weight tying, bias, or relaxed norm constraints, therefore affects the interference it realizes. We summarize our main contributions below:

\begin{itemize}[leftmargin=4mm]
\setlength\itemsep{0.1em}
\item \textbf{Effective interference.} We define \emph{effective interference} from pairwise cross-contributions in the reconstruction, combining feature geometry with how the corresponding codes are used across the data. Its factorization into orientation, co-activation frequency, and conditional magnitude separates the geometric and statistical factors underlying realized interactions.
\item \textbf{Encoder-dependent interference regimes.} Under local fixed-support conditions, we derive an encoder-consistency balance among self-response mismatch, directional incoming cross-talk, bias compensation, and residual coupling, and show how architectural degrees of freedom accommodate this balance through four non-exclusive routes: local orthogonalization, conditional bias compensation, encoder-gain adaptation, and encoder-decoder duality.
\item \textbf{Controlled validation in SAEs.}  We validate our theoretical predictions across SAE families trained on Pythia-160M activations \citep{biderman2023pythiasuiteanalyzinglarge}, showing that architectural constraints give rise to distinct and consistent interference regimes through orthogonalization, bias, gain, and encoder-decoder freedom. Similar patterns appear in SAEs trained on MNIST and in independently trained language-model SAEs, illustrating how effective interference captures interactions among SAE features that geometric measures alone do not.
\end{itemize}

\begin{figure}[t]
    \vspace{-10pt}
    \centering
    \includegraphics[width=\linewidth]{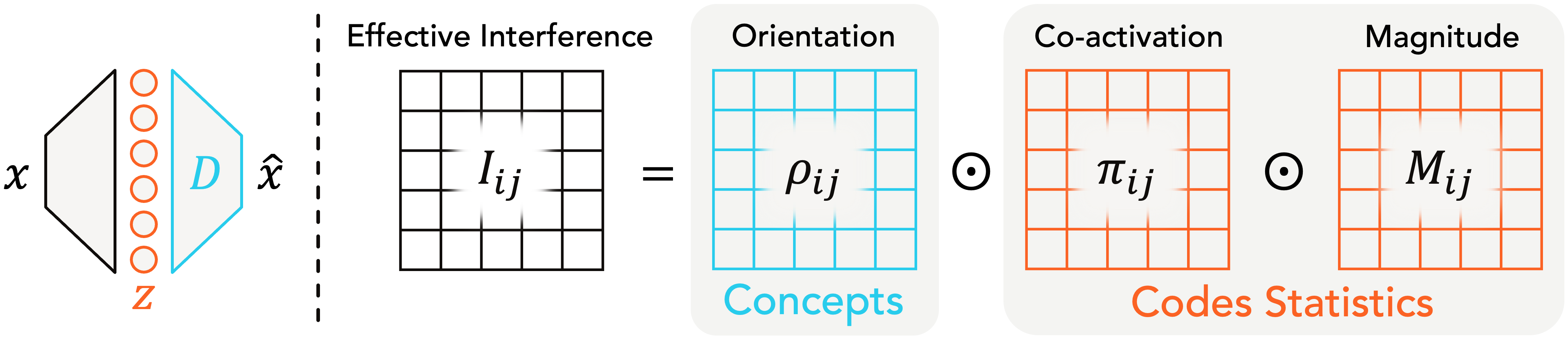}
    \vspace{-5pt}
    \caption{\textbf{Effective interference is a joint property of concept geometry and code statistics.} It factors as $I_{ij}=\rho_{ij}\pi_{ij}M_{ij}$, where $\rho_{ij}=\dvec_i^\top\dvec_j$ captures the geometric overlap between dictionary elements $\dvec_i,\dvec_j$, while $\pi_{ij}M_{ij}=\E[z_i z_j]$ captures how their corresponding codes $z_i,z_j$ are used across the data, through co-activation frequency $\pi_{ij}$ and conditional magnitude $M_{ij}$.}
    \label{fig:figinterference}
    \vspace{-10pt}
\end{figure}
%
%%%%%%%%%%%%%%%%%%%%%%%%%%%%%%%%%%
%%%%%%%%%%%%%%%%%%%%%%%%%%%%%%%%%%
%
\vspace{-1mm}
\section{Related Work}\label{app:related-work}
%

%%%%%%%%%%%%%%%%%%%%%%%%%%%%%%
\textbf{Measuring interference.}\quad Under superposition~\citep{elhage2022toymodelssuperposition}, concept directions overlap, yet interference is measured almost exclusively through geometry, depending only on the dictionary. This geometric view treats interference as a cost, motivating global overlap penalties~\citep{korznikov2025ortsae}, conditional orthogonalization of co-active concepts~\citep{costa2025from}, and geometric design and evaluation~\citep{lee2025evaluating}. Interference patterns also transfer across models and can be exploited by adversarial examples, linking them to network vulnerabilities~\citep{stevinson2026adversarial,gorton2025adversarial,gong2026signal}. Few works consider the accompanying code statistics. Exceptions formulate superposition information-theoretically~\citep{bereska2025superposition}, study concept co-occurrence~\citep{Li_2025}, use co-activation to identify causally relevant concepts~\citep{deng2026sparse}, or assess monosemanticity via latent coherence~\citep{filus2026measuring}. \citet{tolooshams2025sparse} show that Fourier-neural-operator SAEs admit broader geometric correlations because their functional codes specify where and how concepts are expressed rather than only their magnitude; in our framework, this may keep effective interference low despite geometric correlation. \citet{ivanov2026spectral} distinguish \emph{incidental} overlap between unrelated concepts~\citep{lecomte2023causes} from \emph{structural} interference reflecting relationships in the data distribution~\citep{kantamneni2025language,gurnee2024language}. Conditional orthogonalization~\citep{costa2025from} recognizes co-activation but modifies inference rather than measuring interference. We instead measure interference jointly via concept geometry and code statistics.

%%%%%%%%%%%%%%%%%%%%%%%%%%%%%%
\textbf{Superposition.}\quad Prior work studies superposition through generative models with prescribed sparse features and code distributions, asking how they are represented in lower-dimensional spaces~\citep{chowdhury2026effectssparsitysuperpositionloss}. Tied bottleneck autoencoders show that sparsity permits more features than hidden dimensions at the cost of interference~\citep{elhage2022toymodelssuperposition}, while subsequent theory characterizes the capacity of finite-dimensional representations to encode and compute with features~\citep{garg2026many,hanni2024mathematical,adler2024complexity,borobia2026linear}. \citet{prieto2026from} further show that co-activation statistics shape interference patterns to reflect data structure. We study the complementary inverse question: without positing a generative model, we start from model activations and ask what interference structure an extractor recovers.

%%%%%%%%%%%%%%%%%%%%%%%%%%%%%%
\textbf{Concept extraction and sparse autoencoders.}\quad Concept-based interpretability extracts directions from neural activations~\citep{doshi2017towards,kim2018interpretability}. ACE~\citep{ghorbani2019towards}, ICE~\citep{zhang2021invertible}, and CRAFT~\citep{fel2023craft} can be viewed as dictionary learning~\citep{mairal2009online,pmlr-v35-agarwal14a,fel2023holistic}, while newer methods extract features locally~\citep{hindupur2025projecting,shafran2026from}. Building on sparsity as an interpretability prior~\citep{mairal2014sparse,ribeiro2016should,lipton2018mythos}, SAEs likewise learn concept dictionaries with sparse codes. They are widely used in mechanistic interpretability~\citep{cunningham2023sparse,gao2025scaling,bussmann2024batchtopk}, with applications beyond language in vision and science~\citep{pach2025sparse,fel2025into,adams2025mechanistic,gujral2025sparse,simon2025interplm,nair2026interpreting}. This broad use has focused attention on how architecture and optimization shape recovered concepts. \citet{hindupur2025projecting} show that SAE projections encode distinct data assumptions, emphasizing alignment between the encoder and target structure. This has motivated architectures for temporal structure~\citep{lubana2026priors,bhalla2026temporal} and concept hierarchies~\citep{park2024the}, as well as objectives such as Matryoshka~\citep{bussmann2025learning} for feature absorption~\citep{chanin2024a}.

%%%%%%%%%%%%%%%%%%%%%%%%%%%%%%
\textbf{Theory of sparse autoencoders.}\quad Sparse autoencoders are closely related to sparse coding, or dictionary learning~\citep{olshausen1997sparse}. Classical recovery theory gives conditions under which gradient-based methods recover a ground-truth dictionary in shallow SAEs~\citep{pmlr-v35-agarwal14a,chatterji2017alternating,papyan2017working,arorasparse,nguyen2019dynamics} and deep unrolled architectures~\citep{tolooshams2022stable}, typically assuming independent sparse codes and dictionary incoherence~\citep{mutual_coherence}. Recent SAE analyses identify limits to concept recovery~\citep{klindt2025superposition}, including insufficient sparsity~\citep{cui2026on}, non-identifiability and spurious partial minima~\citep{tang2025theoretical}, and instability across runs yielding different dictionaries and codes~\citep{nelson2026toward,fel2025archetypal,paulo2026sparse}. Our closest connection is \citet{dorrell2026optimality}, who avoids generative ground truth and derive optimality constraints for $\ell_1$-regularized dictionary learning that explain splitting and absorption~\citep{chanin2024a}. We define interactions within a learned additive representation and study how an amortized encoder accommodates them across supports, yielding a frequency-magnitude interference profile and directional encoder-decoder cross-talk not captured by prior optimality constraints.
\vspace{-1mm}
\section{Effective Interference in Additive Representations}\label{sec:interference}
\vspace{-1mm}
\textbf{Notation.}\quad Lowercase $a$, bold lowercase $\bm a$, and uppercase $\A$ denote scalars, vectors, and matrices, respectively; $\bm a_i$ is the $i$-th column of $\A$. We use $\odot$ for the Hadamard product, $[a]_+:=\max\{a,0\}$, and $[a]_-:=\min\{a,0\}$. For a vector $\z$, $\operatorname{supp}(\z)=\{i:z_i\neq0\}$, and $\z$ is $k$-sparse if $|\operatorname{supp}(\z)|=k$. Given a support $S$, $\z_S$ and $\D_S$ restrict $\z$ to entries in $S$ and $\D$ to columns in $S$. We call $\x$ a \emph{model activation} and $\z(\x)$ its \emph{code}, and use ``directions'', ``features'' and ``concepts'' interchangeably.

Consider an additive representation 
\vspace*{-12pt}
\begin{equation}
\widehat{\x}=\D\z(\x)\ =\ \sum_{i=1}^{p}\dvec_i z_i(\x)
\vspace*{-4pt}
\end{equation}
where $\x\sim P$, $\D\in\R^{m\times p}$ contains $p$ nonzero concepts, and $\z(\x)\in\R^p$ their input-dependent codes. We distinguish \emph{potential} interactions, determined by dictionary geometry, from interactions realized by the codes across the data. We assume $\E\|\dvec_i z_i(\x)\|_2^2<\infty$ for all $i$ and, unless stated otherwise, nonnegative active codes and normalized concepts $\|\dvec_i\|_2=1$. Beyond these conditions, we assume nothing about how the codes are obtained, their sparsity, or the data-generating distribution $P$.
\begin{definition}[Effective interference]
\label{def}
For two distinct features $i\neq j$, define their sample-wise interaction as $\iota_{ij}(\x)
:=(\dvec_i^\top\dvec_j)z_i(\x)z_j(\x)$. Their effective interference is the expected interaction
\begin{equation}
I_{ij}
:=\E[\iota_{ij}(\x)]
=(\dvec_i^\top\dvec_j)\E[z_i(\x)z_j(\x)].
\label{eq}
\end{equation}
\end{definition}
Effective interference captures the realized, code-weighted interaction between learned features, and is symmetric (i.e., $I_{ij}=I_{ji}$). Importantly, it is an expected component cross-term within a chosen decomposition. These pairwise interactions can be collected into a matrix (\Cref{fig:figinterference}), whose off-diagonal entries measure pairwise feature interactions and whose diagonal entries capture individual feature contributions. This distinction follows from the reconstruction-energy decomposition
\begin{equation}
\E\|\widehat{\x}\|_2^2
=
\sum_{i=1}^{p}E_i+\sum_{i=1}^{p}\sum_{j\neq i}I_{ij}
=\sum_{i=1}^{p}E_i+2\sum_{i<j}I_{ij},
\end{equation}
where $E_i:=\E\|\dvec_i\z_i(\x)\|_2^2$ and $I_{ij}$ denote individual and pairwise contributions, respectively. Thus, $I_{ij}>0$ increases reconstruction energy relative to the individual contributions and is \emph{constructive}, whereas $I_{ij}<0$ is \emph{destructive}. These terms do not by themselves imply benefit or harm to reconstruction error or downstream behaviour. Moreover, this terminology describes how components combine within $\widehat{\x}$, not their effect on reconstruction error: the sign of $I_{ij}$ alone does not determine the change in $\E\|\widehat{\x}-\x\|_2^2=\E\|\widehat{\x}\|_2^2 -2\E[\x^\top\widehat{\x}] +\E\|\x\|_2^2$, since the alignment term $-2\E[\x^\top\widehat{\x}]$ also contributes.

As a joint geometric–statistical quantity, effective interference has three useful properties on support awareness, reparameterization invariance, and sign under nonnegative coding (see~\Cref{prop:interference-properties}). Within this formulation, geometry determines interaction orientation, whereas codes determine whether and how strongly it is realized. Hence, unlike cosine similarity, effective interference is support-aware and scale-aware, offered by the code statistics.

%%%%%%%%%%%%%%%%%%%%%%
\textbf{Interference profile.}\quad The second moment $\E[z_iz_j]$ combines how often two features co-activate with how large their components are when they do. To separate these effects, define
\begin{equation}
    a_i:=\mathbf{1}\{z_i\neq0\},
    \qquad\qquad
    \pi_{ij}:=\Pr(a_i=1,a_j=1),
    \qquad\qquad
    \rho_{ij}:= \dvec_i^\top\dvec_j,
    \label{eq:interference-factors}
\end{equation}
where $a_i$ is the activation indicator, $\pi_{ij}$ is the pair's co-activation frequency, and $\rho_{ij}$ is its geometric orientation. For $\pi_{ij}>0$, let $M_{ij}:=\E[z_iz_j\mid a_i=1,a_j=1]$. Hence, effective interference can be factorized. For every pair with $\pi_{ij}>0$, the effective interference can be factorized as follows:
\begin{equation}
   \boxed{
   I_{ij} =\rho_{ij}\ \pi_{ij}\ M_{ij}
   }
    \label{eq:interference-factorization}
\end{equation}
If $\pi_{ij}=0$, then $I_{ij}=0$ and the conditional magnitude $M_{ij}$ need not be defined. We call $\Pi_{ij}:=(\rho_{ij},\pi_{ij},M_{ij})$ the pair's \emph{effective-interference profile}. $I_{ij}$ measures its average contribution, while $\Pi_{ij}$ distinguishes geometric orientation, interaction exposure, and conditional magnitude. Consequently, equal values of $I_{ij}$ may arise from frequent weak co-activation or rare strong co-activation.

To compare interference across representations, we separate positive and negative pairwise contributions and define the normalized signed interferences
\begin{equation}
\mathcal{I}_+
:=
\frac{\E\!\left[2\sum_{i<j}[\iota_{ij}(\x)]_+\right]}
{\E[\|\widehat{\x}\|_2^2]},
\qquad
\mathcal{I}_-
:=
\frac{\E\!\left[2\sum_{i<j}[\iota_{ij}(\x)]_-\right]}
{\E[\|\widehat{\x}\|_2^2]},
\qquad
\mathcal{I}_{\mathrm{net}}
:=
\mathcal{I}_+ + \mathcal{I}_-.
\end{equation}
The signed terms distinguish constructive from destructive interference, while $\mathcal{I}_{\mathrm{net}}$ captures their net effect and may vanish through cancellation even when both are large.

 Overall, the interference profile distinguishes several regimes that a single geometric or aggregate statistic conflates. Large $|\rho_{ij}|$ together with $\pi_{ij}\approx0$ describes \emph{unrealized overlap}: the directions admit interaction, but their supports prevent it. Large $\pi_{ij}$ and small $M_{ij}$ describe frequent weak interactions, whereas small $\pi_{ij}$ and large $M_{ij}$ describe rare strong interactions.

%
%%%%%%%%%%%%%%%%%%%%%%%%%%%%%%%%%%
%%%%%%%%%%%%%%%%%%%%%%%%%%%%%%%%%%
%
% \vspace*{-3em}
\section{How Sparse Autoencoders Handle Interference}
\label{sec:local optimality}
The effective-interference profile characterizes how features interact in a learned representation. The architecture and optimization of the extractor determine which profiles arise. Consider a typical amortized SAE architecture~\citep{cunningham2023sparse,bricken2023monosemanticity,gao2025scaling},
\begin{equation}
\z=\sigma(\W^\top\x+\bvec),
\qquad
\widehat{\x}=\D\z,
\label{eq:sae}
\end{equation}
where $\W$ and $\D$ are the encoder and decoder weights, $\bvec$ is the encoder bias, and $\sigma$ is a sparsifying nonlinearity. Unlike normalized concepts in $\D$, encoder weights $\W$ need not be normalized. An SAE shapes interference at two levels: encoder scores together with $\sigma$ determine the support and hence $\pi_{ij}$, while the encoder and decoder govern interactions among co-active features and hence $\rho_{ij}$ and $M_{ij}$. We study the latter on a fixed support, so our conclusions are local to an active region and, for gradient dynamics, to updates that preserve the support. Within a fixed support $S$, the encoder is locally affine:
\vspace{-5pt}
\begin{equation}
\z_S=\W_S^\top\x+\bvec_S,
\qquad
\z_{S^c}=0.
\label{eq:local-affine-encoder}
\vspace{1pt}
\end{equation}
Rather than optimizing codes separately for each input, an SAE predicts them with a shared encoder. The map in~\eqref{eq:local-affine-encoder} thus amortizes fixed-support least squares across inputs. \Cref{prop:fixed-support-optimality} sets code-optimality condition for each input example. We treat this condition as an ideal target, and develops this section's analyses for when the code-optimality gap is small. This code-optimality gap relates to amortization gap, and can be small for in-distribution data~\citep{margossian2023amortized} (\Cref{sec:results} further investigates this in SAEs). 
\begin{restatable}[Code-optimality condition]{proposition}{fixedsuppoptimality}
\label{prop:fixed-support-optimality}
For a fixed decoder $\D$ and support $S$, any interior least-squares optimum satisfies
\begin{equation}
    \D_S^\top \res = 0,
    \qquad
    \res := \x - \D_S \z_S.
\label{eq:fixed-support-opt}
\end{equation}
For nonnegative codes, interior means $z_i>0\ \forall i\in S$.
\end{restatable}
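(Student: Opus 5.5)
The plan is to treat the per-input fixed-support problem as an unconstrained (or locally unconstrained) smooth optimization in the active coordinates $\z_S$ and read off the first-order stationarity condition. For a fixed decoder $\D$, support $S$, and input $\x$, I will define the objective
\begin{equation*}
f(\z_S) := \tfrac12\|\x-\D_S\z_S\|_2^2,
\end{equation*}
with $\z_{S^c}=0$ held fixed. This is a convex quadratic in $\z_S$, and it is differentiable everywhere. A direct expansion gives the gradient $\nabla f(\z_S) = -\D_S^\top(\x-\D_S\z_S) = -\D_S^\top\res$, with $\res$ as in \eqref{eq:fixed-support-opt}.

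The key step is to justify that the gradient must vanish at an optimum, and I will split this by the code constraint. Without a sign constraint, the feasible set is all of $\R^{|S|}$, so any minimizer is an interior point. For nonnegative codes, the feasible set is the orthant $\{\z_S\ge 0\}$. The hypothesis $z_i>0$ for all $i\in S$ places the optimum in the open orthant, so there is a neighbourhood of $\z_S$ contained in the feasible set. In either case the optimum is a local minimizer of the differentiable function $f$ over an open set, so Fermat's rule gives $\nabla f(\z_S)=0$, that is, $\D_S^\top\res=0$.

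The only point requiring care is that the notion of ``interior'' must be tied to the constraint set, which is why the statement specifies it for nonnegative codes. At a boundary point with some $z_i=0$, only the KKT conditions would hold: $[\D_S^\top\res]_i\le 0$ with complementary slackness. So the positivity hypothesis is exactly what upgrades KKT to equality. I do not expect a real obstacle here.

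Optionally, I will also note the converse. By convexity, $\D_S^\top\res=0$ is both necessary and sufficient for optimality over the interior. Equivalently, these are the normal equations $\D_S^\top\D_S\z_S=\D_S^\top\x$, which make $\res$ orthogonal to the span of the active features. This interpretation is the one used later in the section.
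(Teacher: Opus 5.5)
Your proposal is correct and follows the paper's own argument. The paper likewise writes the fixed-support least-squares objective, computes its gradient $-\D_S^\top\res$, and sets it to zero at an interior optimum where no nonnegativity constraint is active; your KKT remark for boundary points and your convexity-based converse are accurate additions the paper does not spell out.
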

Code optimality is a decoder condition: it constrains $\D$ given the codes, but not the encoder $\W$. We therefore require an encoder consistency identity (\Cref{prop:enc-consistency-cond}). Let $\alpha_i=\|\w_i\|_2$ and $\gamma_{ij}:=\w_i^\top\dvec_j/\|\w_i\|_2$ be the cosine similarity between $\w_i$ and $\dvec_j$, so that $\w_i^\top\dvec_j=\alpha_i\gamma_{ij}$. Unlike $\D^\top\D$, which captures interactions among reconstruction components, $\W^\top\D$ couples encoder and decoder directions and is asymmetric.
\begin{restatable}[Encoder consistency identity]{proposition}{encconsidn}
\label{prop:enc-consistency-cond}
Suppose the encoder is locally affine as in~\eqref{eq:local-affine-encoder}. For every active feature $i\in S$,
\begin{equation}
    0=(\alpha_i\gamma_{ii}-1)z_i+
    \sum_{j\neq i}\alpha_i\gamma_{ij}z_j+b_i+\w_i^\top\res.
    \label{eq:sample-balance}
\end{equation}
Consequently, for every $i$ with $\Pr(a_i=1)>0$,
\begin{align}
0=\underbrace{(\alpha_i\gamma_{ii}-1)\E[z_i\mid a_i=1]}_{\text{self-response mismatch}}\
    +\ \underbrace{\sum_{j\neq i}\alpha_i\gamma_{ij}
    \E[z_j\mid a_i=1]}_{\text{incoming cross-talk}}
+\underbrace{b_i}_{\substack{\text{bias}\\\text{compensation}}}
    +\underbrace{\E[\w_i^\top\res\mid a_i=1]}_{\text{residual coupling}},
    \label{eq:conditional-balance}
\end{align}
\end{restatable}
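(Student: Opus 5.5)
The identity is purely algebraic. The plan is to substitute the reconstruction into the local affine encoder and then take conditional expectations. Fix a support $S$ and an input $\x$ whose code has support $S$. By~\eqref{eq:local-affine-encoder}, for each active $i\in S$ we have $z_i=\w_i^\top\x+b_i$. Write $\x=\D_S\z_S+\res$ using the definition of $\res$ in \Cref{prop:fixed-support-optimality}. Note that this is only a definition of the residual, so no optimality is needed for the identity itself. Substituting gives
\begin{equation*}
z_i=\sum_{j\in S}(\w_i^\top\dvec_j)z_j+\w_i^\top\res+b_i .
\end{equation*}
Because $z_j=0$ for $j\notin S$, the sum over $j\in S$ equals the sum over all $j$.

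Next, I split off the $j=i$ term and substitute $\w_i^\top\dvec_j=\alpha_i\gamma_{ij}$. This is well defined whenever $\w_i\neq 0$; if $\w_i=0$, the terms $\alpha_i\gamma_{ij}$ are read as $0$ and the identity still holds with $z_i=b_i$. Moving $z_i$ to the right-hand side yields
\begin{equation*}
0=(\alpha_i\gamma_{ii}-1)z_i+\sum_{j\neq i}\alpha_i\gamma_{ij}z_j+b_i+\w_i^\top\res,
\end{equation*}
which is~\eqref{eq:sample-balance}. The identity holds pointwise for every input on which $i$ is active, whatever the support $S\ni i$ is. So it holds on the whole event $\{a_i=1\}$, provided that on each support region the encoder has the affine form with the same global parameters $\w_i,b_i$. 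This is true because $\sigma$ acts entrywise and an active coordinate passes $\w_i^\top\x+b_i$ through (for ReLU-like activations).

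For~\eqref{eq:conditional-balance}, I multiply~\eqref{eq:sample-balance} by $a_i$ and take expectations, then divide by $\Pr(a_i=1)>0$. The coefficients $\alpha_i,\gamma_{ij},b_i$ are constants, so linearity of conditional expectation gives the four terms directly.

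The only point requiring care is integrability. Each $z_j$ has a finite second moment by the standing assumption $\E\|\dvec_jz_j\|_2^2<\infty$ together with $\|\dvec_j\|_2=1$, so $\E|z_j|<\infty$. The term $\w_i^\top\res$ is integrable if $\E\|\x\|_2<\infty$, which I would state as an assumption. Alternatively, integrability follows from~\eqref{eq:sample-balance} itself, since $\w_i^\top\res$ is a finite linear combination of integrable terms on $\{a_i=1\}$. That second argument avoids any extra hypothesis, and I would use it.

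I do not expect a real obstacle. The main subtlety is justifying that a single affine identity with the global $(\w_i,b_i)$ holds across all supports containing $i$, which is exactly what the locally affine assumption provides.
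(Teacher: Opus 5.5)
Your proposal is correct and is the paper's own argument: substitute $\x=\D_S\z_S+\res$ into $z_i=\w_i^\top\x+b_i$ on the active region, separate the $j=i$ term, and take the expectation conditional on $a_i=1$. Your extra points are sound and make explicit what the paper's one-line proof leaves implicit: integrability obtained from the sample identity itself, and the same global $(\w_i,b_i)$ holding on every support containing $i$. One small correction is that for TopK-type sparsifiers this second point comes from active coordinates passing through unchanged, not from $\sigma$ acting entrywise.
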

\vspace{-3mm}
where $\gamma_{ii}$ and $\gamma_{ij}$ are encoder-decoder self-alignment and cross-alignment, respectively. The encoder gain $\alpha_i$ rescales the entire response of encoder $i$, including its self-response, incoming cross-talk, and residual coupling. With the code-optimality condition, \Cref{prop:enc-consistency-cond} exposes which architectural terms can accommodate interactions. Its incoming cross-talk is directional: the effect of feature $j$ on the encoder score of feature $i$ depends on $\w_i^\top\dvec_j$. This contrasts with the symmetric decoder interference, $I_{ij}=(\dvec_i^\top\dvec_j)\E[z_i z_j]$. Moreover, for tied weights, \eqref{eq:fixed-support-opt} removes the residual coupling term in~\eqref{eq:conditional-balance}, but not generally for an untied encoder, since $\D_S^\top\res=0$ does not imply $\W_S^\top\res=0$.

To isolate the role of each term in the encoder-consistency balance, we begin with a tied, unit-norm, bias-free baseline and relax one constraint at a time by introducing bias, encoder gain, and independent encoder directions. Holding the remaining degrees of freedom fixed allows each change in interference to be attributed to a specific accommodation mechanism. These settings serve as analytical controls rather than assumptions about practical SAEs.

\textbf{Route I: constrained local orthogonalization.}\quad
Consider the tied and normalized $\alpha_i=1$, $\gamma_{ii}=1$, $\gamma_{ij}=\rho_{ij}$, and bias-free $\bvec=0$ setting. Under the fixed-support code optimality condition per input~\eqref{eq:fixed-support-opt}, the encoder-consistency condition~\eqref{eq:sample-balance} gives
\begin{equation}
    (\D_S^\top\D_S-\eye)\z_S=0.
\end{equation}
For a single input, this only constrains $\D^{\top}_S\D_S-\eye$ along the observed code vector $\z_S$. However, if the code vectors observed on support $S$ span $\R^{|S|}$, the condition must hold in every direction and therefore implies
$\D^\top_S\D_S=\eye$. Thus, tied normalized SAEs are driven toward orthogonality among concepts that co-activate, rather than across the entire overcomplete dictionary.

The same local orthogonalizing tendency can be seen directly from the training dynamics. For two active features $S=\{i,j\}$, define $\Delta(\D_S):=\|\D_S^\top\D_S-\eye\|_F^2 =2\rho_{ij}^2$, where the second equality follows from unit-norm columns.
\newpage

\vspace*{-20pt}

\begin{restatable}[Local orthogonalization for two active features]{proposition}{localorth}
\label{prop:local-orthogonalization}
Consider the tied, normalized, bias-free model with fixed support $S=\{i,j\}$ and loss $\frac12\|\widehat{\x}-\x\|_2^2$. Let $\D_S^+$ denote the active dictionary after one gradient step of size $\eta$, followed by column normalization. Then
\begin{equation}
    \Delta(\D_S^+)-\Delta(\D_S)
    =-8\eta\rho_{ij}^2
      \|z_i\dvec_i-z_j\dvec_j\|_2^2+O(\eta^2).
    \label{eq:local-orthogonalization-update}
\end{equation}
Hence, if $\rho_{ij}\neq0$ and $z_i\dvec_i\neq z_j\dvec_j$, a gradient step cannot increase $\Delta(\D_S)$ to first order.
\end{restatable}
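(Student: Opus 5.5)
The plan is to compute the first-order change of $\rho_{ij}$ under one gradient step followed by normalization, and then use $\Delta(\D_S)=2\rho_{ij}^2$, so that $\Delta(\D_S^+)-\Delta(\D_S)=4\rho_{ij}\,\delta\rho_{ij}+O(\eta^2)$. In the tied, normalized, bias-free model on $S=\{i,j\}$, the code is $z_k=\dvec_k^\top\x$ for $k\in S$. The reconstruction is $\widehat{\x}=\D_S\D_S^\top\x$ and the residual is $\res=\x-\widehat{\x}$. The loss is $\frac12\|\res\|_2^2$, and because $\dvec_k$ enters both the encoder and the decoder, its gradient has two parts:
\begin{equation*}
\nabla_{\dvec_k}\tfrac12\|\res\|_2^2=-\bigl(z_k\res+(\dvec_k^\top\res)\x\bigr).
\end{equation*}
The step is therefore $\dvec_k\mapsto\dvec_k+\eta\gvec_k$ with $\gvec_k:=z_k\res+(\dvec_k^\top\res)\x$. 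Column normalization, to first order in $\eta$, keeps only the tangential part $(\eye-\dvec_k\dvec_k^\top)\gvec_k$.

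First, I will write the first-order change of the overlap:
\begin{equation*}
\delta\rho_{ij}=\eta\bigl[\dvec_j^\top(\eye-\dvec_i\dvec_i^\top)\gvec_i+\dvec_i^\top(\eye-\dvec_j\dvec_j^\top)\gvec_j\bigr].
\end{equation*}
Each bracket has the form $\dvec_j^\top\gvec_i-\rho_{ij}\dvec_i^\top\gvec_i$. So I need the scalars $\dvec_k^\top\res$ and $\dvec_l^\top\x=z_l$ for $k,l\in\{i,j\}$. Using $\D_S^\top\res=\D_S^\top\x-\D_S^\top\D_S\z_S=(\eye-\D_S^\top\D_S)\z_S$, these reduce to
\begin{equation*}
\dvec_i^\top\res=-\rho_{ij}z_j,\qquad \dvec_j^\top\res=-\rho_{ij}z_i.
\end{equation*}
This is the key simplification: every residual projection carries a factor $\rho_{ij}$, which is why the final change is proportional to $\rho_{ij}^2$.

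Next, I will substitute these projections into the bracket. This gives
\begin{equation*}
\dvec_j^\top\gvec_i=z_i(-\rho_{ij}z_i)+(-\rho_{ij}z_j)z_j=-\rho_{ij}(z_i^2+z_j^2)
\end{equation*}
and
\begin{equation*}
\dvec_i^\top\gvec_i=z_i(-\rho_{ij}z_j)+(-\rho_{ij}z_j)z_i=-2\rho_{ij}z_iz_j.
\end{equation*}
The same expressions hold with $i$ and $j$ swapped. Summing the two tangential contributions then gives
\begin{equation*}
\delta\rho_{ij}=-2\eta\rho_{ij}\bigl(z_i^2+z_j^2-2\rho_{ij}z_iz_j\bigr)+O(\eta^2).
\end{equation*}
The bracket equals $\|z_i\dvec_i-z_j\dvec_j\|_2^2$ because the columns have unit norm. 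Multiplying by $4\rho_{ij}$ yields~\eqref{eq:local-orthogonalization-update}. The sign claim follows immediately: the leading term is $-8\eta\rho_{ij}^2\|\cdot\|_2^2\le0$.

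I expect the main obstacle to be careful bookkeeping rather than any deep step. Three points need care. The encoder-side gradient term $(\dvec_k^\top\res)\x$ must be kept, since it is what makes the tied gradient differ from pure dictionary learning. The normalization must be linearized correctly, including the projection term $-\rho_{ij}\dvec_i^\top\gvec_i$, which contributes the cross term $z_iz_j$. Finally, no code-optimality assumption should be inserted: \Cref{prop:fixed-support-optimality} would force $\D_S^\top\res=0$ and make the update vanish. The second-order remainder from the normalization (of order $\eta^2\|\gvec_k\|^2$) is routinely absorbed into $O(\eta^2)$.
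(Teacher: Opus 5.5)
Your proposal is correct and follows the paper's proof essentially step for step. You use the tied gradient with both its decoder and encoder terms, linearize the normalization as a tangent projection, obtain $\rho_{ij}^+=\rho_{ij}-2\eta\rho_{ij}(z_i^2+z_j^2-2\rho_{ij}z_iz_j)+O(\eta^2)$, and then square via $\Delta=2\rho_{ij}^2$. Your identity $\D_S^\top\res=(\eye-\D_S^\top\D_S)\z_S$ is only a more compact way of doing the inner-product bookkeeping that the paper carries out by writing out the projected gradients explicitly.
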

\textcolor{myred}{Equation}~\eqref{eq:local-orthogonalization-update} identifies an orthogonalizing contribution on examples where the pair is jointly active. Thus, $\pi_{ij}$ controls the frequency of this direct two-feature contribution, although other support states can also change the pairwise angle. The dynamics therefore predict selective orthogonalization of frequently co-active directions, not global dictionary orthogonalization. \Cref{prop:local-orthogonalization} concerns one-example gradient descent with a support-preserving step. \Cref{sec:results} further shows experiments with higher $k$, batch updates, and Adam optimizer where this predicted tendency still remains.

\textbf{Route II: conditional bias compensation.}\quad We next retain tied, unit-norm encoder-decoder weights but allow a learned encoder bias. Tying implies $\alpha_i=1$, $\gamma_{ii}=1$, and $\gamma_{ij}=\rho_{ij}$ for $i\neq j$. Hence, the self-response mismatch vanishes, while fixed-support code optimality removes residual coupling. We obtain the following population-level condition:
\begin{equation}
    \hat{b}_i = -\sum_{j\neq i} \rho_{ij}\E[z_j\mid a_i=1].
    \label{eq:bias-prediction}
    \vspace{-3pt}
\end{equation}
Thus, a shared bias can compensate for the \emph{mean incoming cross-talk} on examples where feature $i$ is active. It cannot generally cancel the sample-wise cross-talk separately for every input.

\textbf{Route III: encoder-gain adaptation.}\quad We next keep the encoder direction tied to its corresponding decoder direction, but allow the encoder norm to vary:
$\w_i=\alpha_i\dvec_i$, with $\alpha_i>0$. Consequently, $\gamma_{ii}=1$ and $\gamma_{ij}=\rho_{ij}$. In this setting, fixed-support code optimality also gives $\w_i^\top\res=0$. The conditional balance (\Cref{prop:enc-consistency-cond}) therefore gives the following estimate on the encoder gain
\begin{equation}
    \hat{\alpha}_i = \left(1 +\ \frac{\sum_{j\neq i}\rho_{ij} \E[z_j\mid a_i=1]}{\E[z_i\mid a_i=1]}\right)^{-1}.
    \label{eq:norm-balance}
\end{equation}
Thus, positive mean incoming cross-talk predicts $\alpha_i<1$, whereas negative incoming cross-talk predicts $\alpha_i>1$, provided the denominator is positive. Encoder gain rescales both the self-response and incoming cross-talk, providing an alternative to decoder angular orthogonalization. This behaviour manifests itself in the unnormalized learning dynamics (\Cref{prop:local-orthogonalization-unnormalized}): the two-feature angular dynamics depend on both the pair's cosine similarity and its weight norm.

\textbf{Route IV: encoder-decoder duality.}\quad Untying separates decoder interaction from inference cross-talk. For each pair, $(\rho_{ij},\pi_{ij}M_{ij})$ describes decoder interaction, while
$\w_i^\top\dvec_j=\alpha_i\gamma_{ij}$ is the directed cross-gain through which decoder feature $j$ enters encoder score $i$. Fixed-support least squares admits a biorthogonal encoder~\eqref{eq:support-dual}, with targets
$\alpha_i\gamma_{ii}=1$ and $\alpha_i\gamma_{ij}=0$ for $j\neq i$. Because different supports induce different duals, a shared encoder can only approximate these targets jointly. We therefore predict small $|\alpha_i\gamma_{ij}|$ for pairs with large $\pi_{ij}M_{ij}$ even when $\rho_{ij}>0$, particularly when residual coupling is small. When $\alpha_i$ is bounded away from zero, this also predicts $\gamma_{ij}$ near zero. Thus, untying can preserve constructive effective interference while suppressing incoming cross-talk, reflecting the roles of encoders as feature detectors and decoders as reconstruction components~\citep{bricken2024dictionary,rajamanoharan2024improving, Nanda2023}.

The four routes are controlled limits of the same balance and are not mutually exclusive. A practical encoder may distribute accommodation across decoder geometry, encoder gain, encoder-decoder alignment, bias, and residual coupling. We now show how these signatures recur across SAE families.

\begin{figure}[t]
    \vspace{-5mm}
    \centering
    \includegraphics[width=\linewidth]{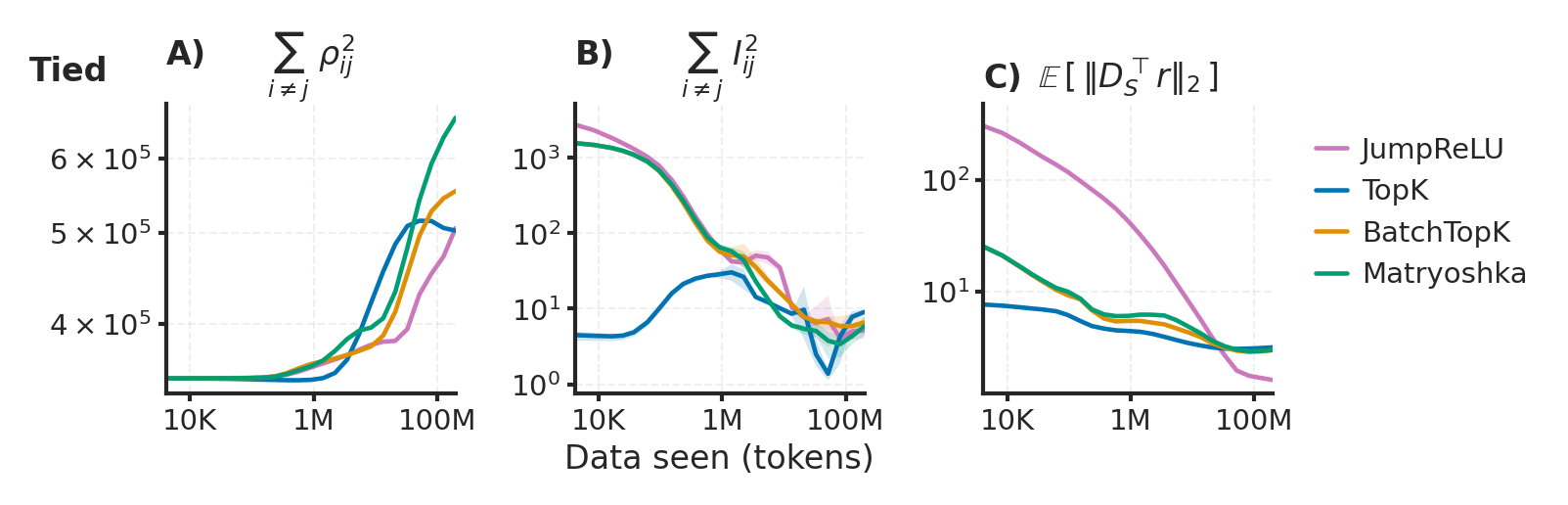}
    \vspace{-8mm}
\caption{\textbf{Geometric and effective interference have distinct training dynamics.} \textbf{A)} Global geometric overlap rises, whereas \textbf{B)} effective interference falls from its peak. Consistent with~\Cref{prop:local-orthogonalization}, this reflects selective orthogonalization of co-active pairs, reducing realized interactions without globally orthogonalizing the dictionary. \textbf{C)} Residual coupling declines but remains nonzero, so fixed-support code optimality (\Cref{prop:fixed-support-optimality}) is approached only approximately (see~\Cref{fig:local-orthogonalization-flame} for joint distribution of decoder orientation $\rho_{ij}$ and co-activation-weighted magnitude $\pi_{ij}M_{ij}$).}
    \label{fig:local-orthogonalization}
    \vspace{-1pt}
\end{figure}
%
%%%%%%%%%%%%%%%%%%%%%%%%%%%%%%%%%%
%%%%%%%%%%%%%%%%%%%%%%%%%%%%%%%%%%
%
\vspace{-1mm}
\section{Results}\label{sec:results}
\vspace{-1mm}
We evaluate whether the architectural degrees of freedom identified in~\Cref{sec:local optimality} predict the interference structure learned by SAEs. Our experiments use TopK~\citep{gao2025scaling,makhzani2013k}, JumpReLU~\citep{rajamanoharan2024jumping}, BatchTopK~\citep{bussmann2024batchtopk}, and Matryoshka~\citep{bussmann2025learning} SAEs trained on Pythia-160M-deduped residual-stream activations at layer 8 with dictionary width $p = 16{,}384$ and sparsity $k = 40$. For additional results and experimental details, see \Cref{app:additional results,app:exp-details}, respectively.

\textbf{How to interpret the results.}\quad We do not rank SAE families by absolute interference. Instead, each family tests the same mechanisms by measuring how untying, encoder bias, and gain adaptation alter interference and encoder-decoder cross-talk. The evidence lies in the direction and recurrence of these within-family changes, not absolute differences across sparsifiers.

We employ the code statistics on activation frequency and magnitude, $\pi_{ij}M_{ij}$, together with the orientation entry $\rho_{ij}$ to show how pairwise interaction is allocated between the decoder and codes. For when encoder gain is enabled, we report $\alpha_i$ and encoder-decoder cross-alignment $\gamma_{ij}$.

\begin{figure}[t]
    \centering
    \includegraphics[width=\linewidth]{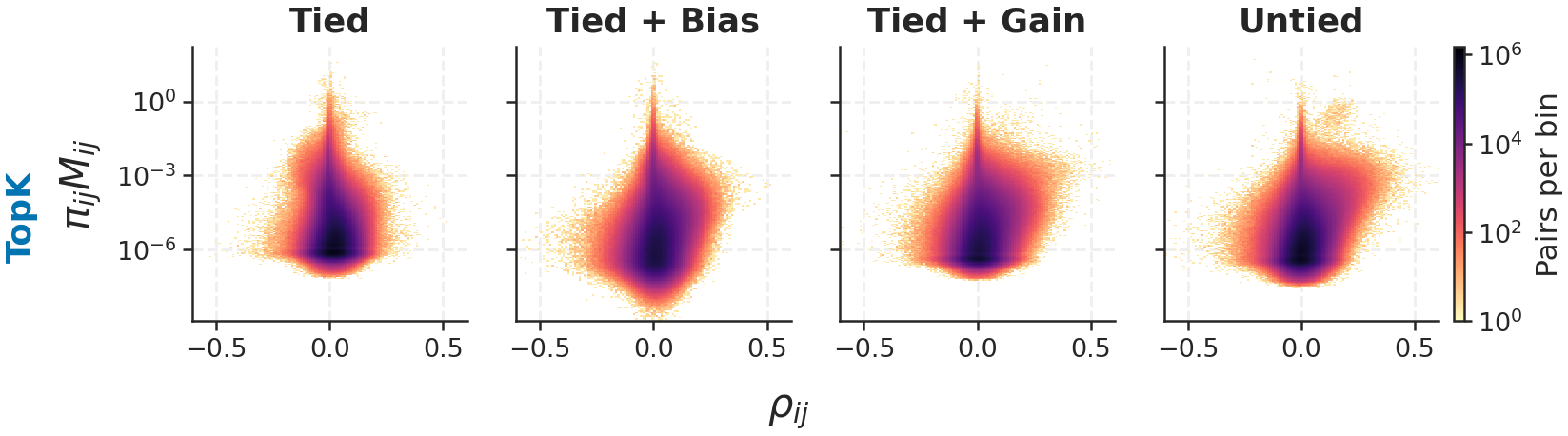}
    \vspace{-15pt}
    \caption{\textbf{Architectural freedom shifts co-active feature pairs toward constructive decoder interactions.} Joint distributions of orientation $\rho_{ij}$ and co-activation-weighted magnitude $\pi_{ij}M_{ij}$. In the tied, normalized, bias-free model, pairs with large $\pi_{ij}M_{ij}$ concentrate near $\rho_{ij}=0$, consistent with local orthogonalizing pressure. With bias, encoder-gain freedom, or an untied encoder, some of these pairs extend toward positive $\rho_{ij}$. This pattern recurs across other SAEs (\Cref{fig:architectural-regimes}).}
    \label{fig:topk-architectural-regimes}
    \vspace{-12pt}
\end{figure}
\textbf{Constrained SAEs suppress realized interactions without globally orthogonalizing.}\quad
\Cref{prop:local-orthogonalization} predicts support-local rather than global orthogonalizing pressure. Across SAE families, global squared geometric overlap increases during training, whereas squared effective interference falls from its peak (\Cref{fig:local-orthogonalization}\textcolor{myred}{A-B}). Thus, the dictionary need not become globally orthogonal for realized interactions to decrease; code statistics determine which overlaps matter. This behaviour is consistent with selective pressure on co-active pairs and refines the global orthogonality picture suggested by the linear representation hypothesis~\citep{costa2025from}. The code-optimality gap $\E[\|\D_S^\top\res\|_2]$ also decreases but remains nonzero (\Cref{fig:local-orthogonalization}\textcolor{myred}{C}), showing that fixed-support optimality is an increasingly accurate, but still approximate, description of the learned codes. This behaviour is aligned with prior investigations of amortization gap in SAEs~\citep{o2024compute}.

\textbf{Architectural freedom enables constructive decoder interactions.}\quad The joint interference profiles in~\Cref{fig:topk-architectural-regimes} reveal a common effect of relaxing architectural constraints. In the tied, normalized, bias-free model, pairs with large $\pi_{ij}M_{ij}$ concentrate near $\rho_{ij}=0$ or shift toward negative. Adding bias or encoder gain, or untying the encoder, allows such pairs to extend toward positive $\rho_{ij}$ and therefore retain constructive effective interference. These profiles establish the common decoder behaviour but do not identify how each architecture accommodates the resulting interactions; the encoder-consistency balance \eqref{eq:conditional-balance} provides this distinction.

\newpage

\begin{wrapfigure}[58]{r}{0.44\textwidth}
\vspace{-7mm}

\begin{minipage}{\linewidth}
  \centering
  \includegraphics[width=\linewidth]{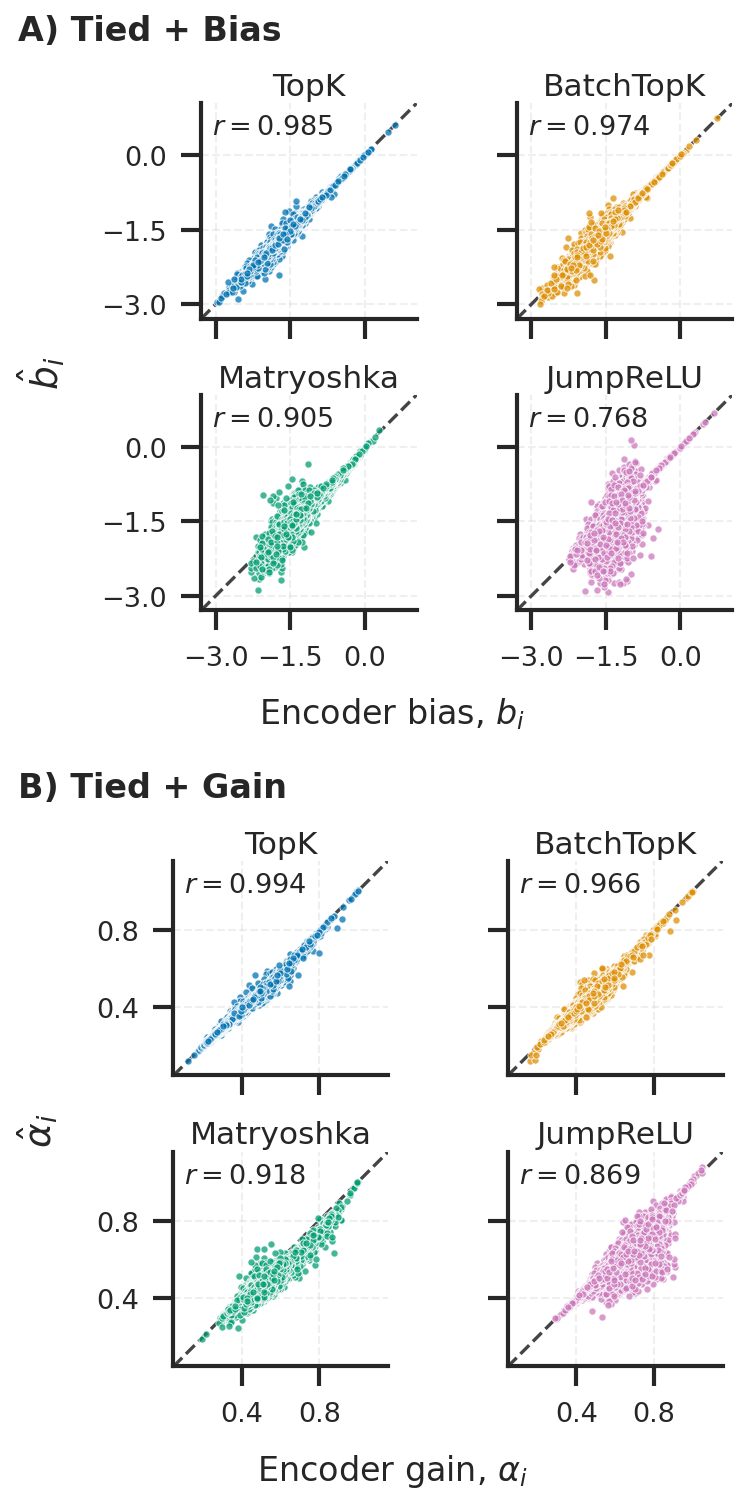}
  \vspace{-7mm}
  \caption{\textbf{Bias and encoder gain compensate mean incoming cross-talk.}
  Each point is a feature; horizontal axes show learned parameters and vertical axes their encoder-balance predictions.
  \textbf{A)} In tied + bias models, $\hat b_i$ from~\eqref{eq:bias-prediction} closely matches $b_i$.
  \textbf{B)} In tied + gain models, $\hat\alpha_i$ from~\eqref{eq:norm-balance} tracks $\alpha_i$.
  Dashed lines indicate equality and $r$ is Pearson correlation.
  This behaviour appears across other SAEs (\Cref{fig:bias-alpha-more}).}
  \label{fig:bias-alpha}
\end{minipage}

\vspace{2mm}

\begin{minipage}{\linewidth}
  \centering
  \includegraphics[width=\linewidth]{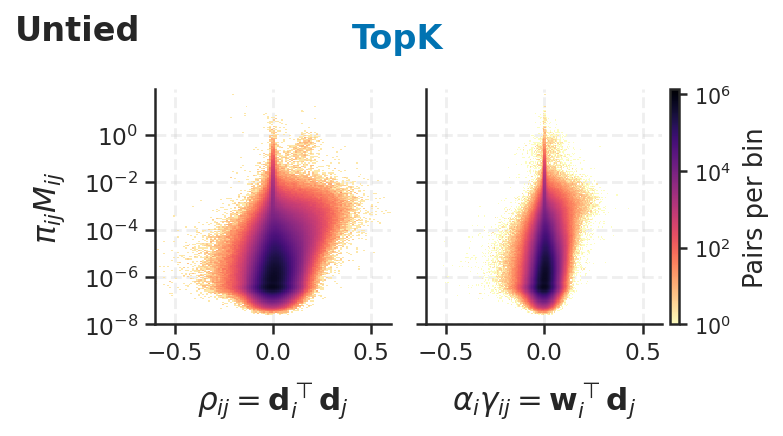}
  \vspace{-20pt}
  \caption{\textbf{Untying separates decoder interactions from encoder cross-talk.}
  Joint distributions of co-activation-weighted magnitude $\pi_{ij}M_{ij}$ against decoder orientation $\rho_{ij}$ and cross-gain $\alpha_i\gamma_{ij}=\w_i^\top\dvec_j$.
  Pairs with large $\pi_{ij}M_{ij}$ tilt toward positive decoder alignment, while their cross-gains concentrate near zero.}
  \label{fig:encdec}
\end{minipage}

\vspace{-6mm}
\end{wrapfigure}

\vspace*{-20pt}

With bias, the predicted $\hat b_i$ closely matches the learned $b_i$ across SAE families (\Cref{fig:bias-alpha}\textcolor{myred}{A}), supporting compensation of mean incoming cross-talk. TopK and BatchTopK lie closest to equality, whereas Matryoshka and JumpReLU deviate more and correlate less. Because $\hat b_i-b_i$ equals the residual-coupling term in this controlled setting, these deviations indicate a less exact fixed-support approximation: its mean magnitude is $0.11-0.14$ for Matryoshka and JumpReLU against $0.04-0.05$ for TopK and BatchTopK. With encoder gain, $\hat\alpha_i$ similarly tracks the learned $\alpha_i$ (\Cref{fig:bias-alpha}\textcolor{myred}{B}), with the same ordering ($0.11-0.15$ versus $0.03-0.06$). Positive mean incoming cross-talk predicts $\alpha_i<1$, allowing the encoder to attenuate its self-response and incoming cross-talk without changing decoder normalization. This mechanism parallels \citet{prieto2026from}, who observe increased constructive interference on toy models trained with weight decay. Similarly, by training an unnormalized decoder, a smaller decoder norms relax the pressure toward local orthogonality (\Cref{prop:local-orthogonalization-unnormalized}), allowing positively aligned directions to constructively reconstruct.

Untying shifts decoder pairs toward positive alignment $\rho_{ij}$ and increases positive effective interference $I_{ij}$ (\Cref{fig:encdec} left), showing that decoder overlap need not be reduced to control inference cross-talk. The pairwise profiles reveal the mechanism: pairs with large $\pi_{ij}M_{ij}$ exhibit a pronounced tilt toward positive $\rho_{ij}$, while their encoder-decoder cross-gains $\alpha_i\gamma_{ij}=\w_i^\top\dvec_j$ remain thin and concentrated near zero (\Cref{fig:encdec} right). Untying therefore separates reconstruction from inference: the decoder retains constructive interactions, while the encoder limits how decoded feature $j$ enters the score of feature $i$. This pattern is consistent with approximate support-conditional biorthogonalization, although these pairwise distributions do not establish exact duality on every support.

\textbf{Architectural freedom reallocates interference handling.}\quad We decompose the normalized off-diagonal contribution into constructive $\mathcal I_+$ and destructive $\mathcal I_-$ terms, with $\mathcal I_{\mathrm{net}}=\mathcal I_+ + \mathcal I_-$ (\Cref{fig:aggregate-interference}). This separation is essential: in the tied baseline, sizable positive and negative contributions largely cancel, particularly for JumpReLU, so a small net value does not imply weak feature interactions. Within every SAE family, introducing bias, encoder gain, or independent encoder directions shifts the balance toward positive net interference relative to the tied baseline, although the intermediate variants are not strictly ordered because the accommodation mechanisms interact rather than combine additively. Untying produces the largest constructive contribution across all four families: $\mathcal I_+ + |\mathcal I_-|$ accounts for roughly $40$--$50\%$ of reconstruction energy, while the net contribution remains approximately $15$--$25\%$. Effective interference is therefore not a small residual effect, nor something SAEs generally minimize; architectural constraints determine whether cross-feature interactions are canceled or retained constructively. This refines a purely geometric view of the LRH~\citep{park2024the}: interference depends not only on dictionary overlap, but also on which features are jointly used.
Finally, we demonstrate the generality of these interference signatures beyond our main setting using raw MNIST images (\Cref{app:mnist}) and pre-trained language-model SAEs (\Cref{app:saebench}).

\textbf{Illustrating the interference profile across support size $k$ and dictionary width $p$.}\quad As a diagnostic example, we vary $k$ and $p$ in TopK SAEs (\Cref{fig:topk-kp-all}). Let $\tilde p$ denote the number of utilized features, accounting for features that never activate. The mean co-activation frequency $\overline{\pi}_{ij}$ closely follows the reference $\nicefrac{k(k-1)}{\tilde p(\tilde p-1)}$ (\Cref{app:barpi}). At fixed $p$, increasing $k$ raises $\overline{\pi}_{ij}$ while reducing the mean conditional magnitude $\overline{M}_{ij}$. Mean absolute decoder overlap $\overline{|\rho_{ij}|}$ also generally decreases in the biased, gain-adaptive, and untied variants. Net interference $\mathcal I_{\mathrm{net}}$, however, has no universal monotonic trend, indicating that increased co-activation can be offset by weaker magnitudes and signed geometric contributions. At fixed $k$, increasing $p$ spreads co-activation across more pairs, reducing $\overline{\pi}_{ij}$, while $\overline{M}_{ij}$ and $\overline{|\rho_{ij}|}$ change less in most settings. The largest tied configurations also show elevated FVU and are therefore interpreted cautiously. This illustrative experiment shows how the interference profile separates compensating statistical and geometric changes without asserting a universal scaling law.

\begin{figure}[t]
    \vspace{-5mm}
    \centering
    \includegraphics[width=\linewidth]{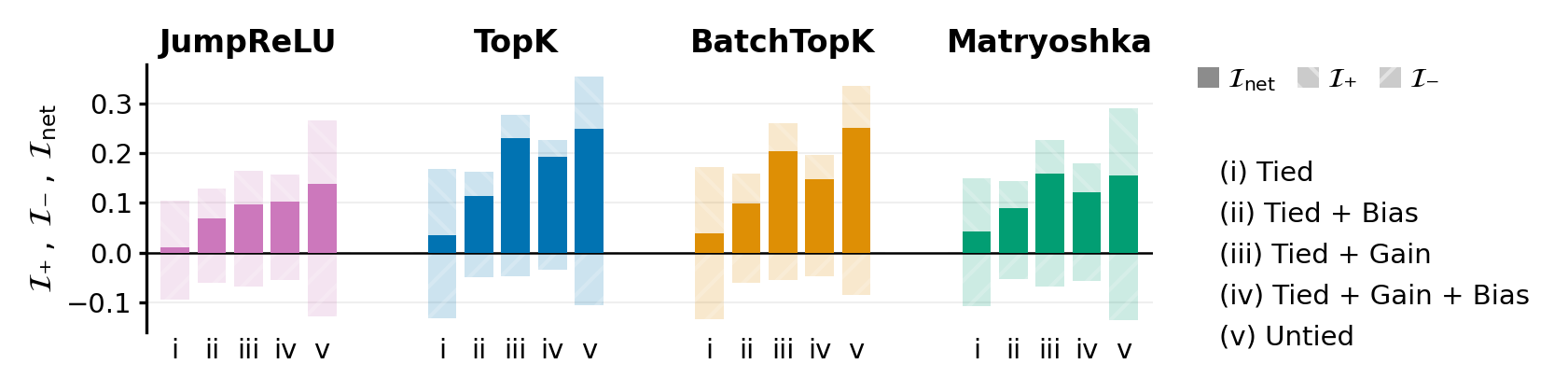}
    \vspace{-6mm}
    \caption{\textbf{SAE reconstruction contains substantial signed effective interference.} Constructive $\mathcal I_+$, destructive $\mathcal I_-$, and net $\mathcal I_{\mathrm{net}}$ contributions across architectural variants and SAE families. Sizable positive and negative terms can cancel, so small net interference does not imply weak interactions. Relative to the tied case, architectural freedom generally shifts the balance toward higher interference.}
     \vspace{-5mm}
    \label{fig:aggregate-interference}
\end{figure}
%
%%%%%%%%%%%%%%%%%%%%%%%%%%%%%%%%%%
%%%%%%%%%%%%%%%%%%%%%%%%%%%%%%%%%%
%
\vspace{-1mm}
\section{Conclusion}\label{sec:conclusion}
\vspace{-1mm}

\textbf{Discussion and limitations.}\quad Effective interference measures realized pairwise cross-contributions within a learned additive decomposition, allowing us to study how SAE architectures allocate reconstruction energy and which interaction regimes they favor. It does not characterize interactions among latent ground-truth concepts, as studied by \cite{prieto2026from}, nor is it invariant to phenomena such as splitting or absorption; identifying these requires evidence beyond $I_{ij}$ and aligned decompositions across extractors. More generally, effective interference diagnoses energy allocation, not representation quality: constructive and destructive describe cross-term signs without implying reconstruction quality, semantic interpretability, causal relevance, or downstream behaviour.

Our theory is local to fixed supports and approximates a shared amortized encoder by per-input code optimality. This approximation can be accurate in-distribution~\citep{margossian2023amortized} but degrade under distribution shift~\citep{o2024compute}, consistent with our small but nonzero measured gaps. Similarly, our local orthogonalization proposition considers two active features and a small, support-preserving gradient step, and thus does not establish global convergence or cover larger supports, minibatch training, adaptive optimizers, or support transitions. Overall, our experiments support the proposed compensation and orthogonalization mechanisms without uniquely identifying a single mechanism.

\textbf{Conclusion.}\quad Interference is not a property of feature geometry alone, but of geometry together with how features co-activate. We formalized this interaction as \emph{effective interference} and showed that SAE architectures realize distinct interference regimes: constrained encoders tend to orthogonalize co-active features, whereas bias, gain, and encoder freedom allow constructive cross-contributions to remain. These results provide a framework for characterizing how architectural choices shape the interactions realized by learned representations, without treating any single interference regime as universally preferable.

%%%%%%%%%%%%%%%%%%%%%%%%%%%%%%%%%%
%%%%%%%%%%%%%%%%%%%%%%%%%%%%%%%%%%

\subsubsection*{Acknowledgments}
V.C. and B.T. would like to thank Demba Ba, Sohini Gupta, and members of NeuBahar Lab for helpful discussions and feedback. V.C. and B.T. acknowledge funding from the Canada CIFAR AI Chairs Program and Alberta Machine Intelligence Institute (Amii). B.T. acknowledges support of Natural Sciences and Engineering Research Council of Canada (NSERC), RGPIN-2026-05959.

%%%%%%%%%%%%%%%%%%%%%%%%%%%%%%%%%%
%%%%%%%%%%%%%%%%%%%%%%%%%%%%%%%%%%
%
% \newpage

\bibliographystyle{iclr2027_conference}
\bibliography{references}

%%%%%%%%%%%%%%%%%%%%%%%%%%%%%%%%%%
%%%%%%%%%%%%%%%%%%%%%%%%%%%%%%%%%%
%
\appendix

\section{Appendix - AI Use Statement}\label{app:ai-use}

Authors used generative AI tools to assist with polishing and improving the writing of the manuscript, including clarity and readability.

We used Claude Code to adapt our existing SAE implementation, training procedure, and evaluation metrics, originally developed and validated on MNIST, to run at scale on cached Pythia activations. The underlying architectures, training objective, and evaluation quantities were fixed beforehand.

We also used GPT-5.6 to obtain feedback on the manuscript, including suggestions on the organization of the results and overall write-up.
We did not use generative AI tools for research formulation, experimental or methodological design, derivations, or theoretical development. We reviewed all AI-assisted work and take responsibility for the final content of this work, including text, claims, and artifacts produced with the aid of generative AI.

\section{Appendix - Reproducibility Statement}
\label{app:reproducibility}

Appendix~\ref{app:exp-details} provides the complete experimental setup, including the model and data, training budget, architectures, initialization, optimization, and constraints. It also reports holdout reconstruction, sparsity, and dead-feature statistics for all runs (\Cref{tab:eval}). Complete proofs of all formal results are provided in the appendix.

The SAEBench experiments (\Cref{app:saebench}) use publicly released checkpoints and SAEBench's own evaluation pipeline. We identify the checkpoints by model, layer, and sparsity target, allowing these results to be reproduced directly from public artifacts.

\section{Appendix - Ethics Statement}

Interpretability is an important component of developing and auditing reliable AI systems, as it can help researchers understand model representations, identify failure modes, and evaluate unintended behaviour. Sparse autoencoders are used for this purpose by extracting human-interpretable features from neural network activations. Our work highlights a consideration for this approach: the extent to which SAE features are separable depends on architectural choices that may otherwise appear incidental. We therefore view our results as supporting more cautious interpretation of individual SAE features and interventions based on them.

Our experiments use publicly released pretrained models, datasets, and SAE checkpoints. No new data were collected and no human subjects were involved. The language corpora are web-derived and may contain personal or biased content, but our analysis is restricted to aggregate geometric statistics and does not inspect or attempt to recover individual examples or sensitive attributes. As with interpretability research more broadly, the methods studied here could in principle contribute to model analysis or reverse engineering; however, our work does not introduce new capabilities for eliciting model behaviour or recovering training data, and we do not identify additional direct risks beyond those associated with existing interpretability methods.

\newpage
\section{Appendix - Effective Interference}\label{app:effective-interference}

The effective interference has the following properties.

\begin{proposition}[Properties of effective interference]
\label{prop:interference-properties}
For distinct features $i\neq j$:
\begin{enumerate}[leftmargin=8mm]
    \item \textbf{Support awareness.} If $\Pr(z_i\neq0,z_j\neq0)=0$, then $I_{ij}=0$, irrespective of $\dvec_i^\top\dvec_j$.
    \item \textbf{Reparameterization invariance.} For positive scalars $\beta_i$, the transformations $\dvec_i\mapsto\beta_i\dvec_i$ and $z_i\mapsto z_i/\beta_i$ leave every component $\dvec_iz_i$ and hence every $I_{ij}$ unchanged.
    \item \textbf{Sign under nonnegative coding.} If $z_i,z_j\geq0$ and the pair co-activates with nonzero probability, then $I_{ij}$ has the sign of $\dvec_i^\top\dvec_j$.
\end{enumerate}
\end{proposition}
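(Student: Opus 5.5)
All three parts follow directly from \Cref{def}, i.e.\ from $I_{ij}=(\dvec_i^\top\dvec_j)\E[z_iz_j]$, together with the standing assumption $\E\|\dvec_iz_i\|_2^2<\infty$. That assumption gives $\E[z_i^2]<\infty$ for nonzero $\dvec_i$, so by Cauchy--Schwarz $\E[z_iz_j]$ is finite and every expectation below is well defined. I would treat the three parts in order, each as a short argument about the random variable $z_iz_j$.

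\textbf{Support awareness.} The product $z_i(\x)z_j(\x)$ is nonzero exactly on the event $\{z_i\neq0,z_j\neq0\}$. If this event has probability zero, then $z_iz_j=0$ almost surely, so $\E[z_iz_j]=0$. Hence $I_{ij}=(\dvec_i^\top\dvec_j)\cdot 0=0$ whatever the value of $\dvec_i^\top\dvec_j$. One could also read this off the factorization~\eqref{eq:interference-factorization} with $\pi_{ij}=0$, but the almost-sure argument avoids needing $M_{ij}$ to be defined.

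\textbf{Reparameterization invariance.} For the pointwise claim, $(\beta_i\dvec_i)(z_i/\beta_i)=\dvec_iz_i$, so every component is unchanged. For the interference, I would rewrite the sample-wise interaction as $\iota_{ij}(\x)=(\dvec_iz_i)^\top(\dvec_jz_j)$. This expression depends only on the components, so it is invariant under rescaling either $i$ or $j$ (or both at once), and taking expectations gives the same invariance for $I_{ij}$. The one point to flag is that the unit-norm convention is dropped under this transformation. The argument should therefore use only the bilinear expression and not $\rho_{ij}$ as a cosine: $\rho_{ij}$ and $M_{ij}$ individually change by $\beta_i\beta_j$ and $1/(\beta_i\beta_j)$, while their product does not.

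\textbf{Sign under nonnegative coding.} Since $z_i,z_j\geq0$, we have $z_iz_j\geq0$, and $z_iz_j>0$ exactly on the co-activation event, which has positive probability by assumption. A nonnegative random variable that is strictly positive on a set of positive measure has strictly positive expectation, so $\E[z_iz_j]>0$ (equivalently $M_{ij}>0$ and $\pi_{ij}>0$ in~\eqref{eq:interference-factorization}). Therefore $\operatorname{sign}(I_{ij})=\operatorname{sign}(\dvec_i^\top\dvec_j)$, including the case $I_{ij}=0$ when the directions are orthogonal.

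The only step requiring any care is this strict positivity. I would justify it by the standard fact that $\E[Y]=0$ with $Y\geq0$ forces $Y=0$ almost surely, which would contradict the co-activation probability being positive.
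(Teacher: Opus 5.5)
Your proposal is correct and takes the same approach as the paper: the paper gives no separate written proof of this proposition and treats all three items as immediate from $I_{ij}=(\dvec_i^\top\dvec_j)\E[z_iz_j]$ and the factorization $I_{ij}=\rho_{ij}\pi_{ij}M_{ij}$, which is the direct verification you carry out. Your extra details fill in what the paper leaves implicit: finiteness of $\E[z_iz_j]$ via Cauchy--Schwarz, writing $\iota_{ij}=(\dvec_iz_i)^\top(\dvec_jz_j)$ so that item 2 does not depend on the unit-norm convention, and the strict-positivity argument for $\E[z_iz_j]$ in item 3.
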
 

%%%%%%%%%%%%%%%%%%%%%%
\subsection{Relation to geometric measures}
\label{sec:geometry-special-case}
Effective interference recovers geometric measures only under restrictive code statistics. Suppose that $\E[z_iz_j]=s$ for every $i\neq j$. Then
\begin{equation}
    I_{ij}=s(\dvec_i^\top\dvec_j),
    \label{eq:gram-special-case}
\end{equation}
so the off-diagonal effective-interference matrix is proportional to the off-diagonal Gram matrix. Purely geometric interference therefore implicitly assumes that the code second moments contain no pair-specific information. In the binary, unit-normalized case, $M_{ij}=1$, and effective interference instead reduces to cosine similarity weighted by the probability of co-activation. This makes explicit the role of latent statistics in models of superposition~\citep{elhage2022toymodelssuperposition,prieto2026from}.

Mutual coherence plays a different role. For normalized directions, let
\begin{equation}
    \mu:=\max_{k\neq \ell}|\dvec_k^\top\dvec_\ell|.
\end{equation}
Under nonnegative coding,
\begin{equation}
    |I_{ij}|
    \leq \mu\,\pi_{ij}M_{ij}.
    \label{eq:coherence-bound}
\end{equation}
Coherence therefore gives a support-agnostic worst-case geometric bound, rather than a measure of the interaction realized by a particular representation~\citep{mutual_coherence,tropp2004greed}. The bound is loose when co-activation concentrates on pairs whose geometric overlap is far below the global maximum, mirroring the distinction between worst-case geometric recovery conditions and data-dependent behaviour in sparse coding and dictionary learning~\citep{mairal2009online,tovsic2011dictionary}.

%%%%%%%%%%%%%%%%%%%%%%
\subsection{Nonnegative Codes}

Our formulation and results consider nonnegative codes. For signed codes, one can replace the nonnegative profile by
\begin{equation}
M^{\rm signed}_{ij}:=\E[z_iz_j\mid a_i=a_j=1],\qquad
A_{ij}:=\E[|z_iz_j|\mid a_i=a_j=1],
\end{equation}
where now $A_{ij}$ is the conditional magnitude. When $A_{ij}>0$, we define the signed code alignment as
\begin{equation}
\kappa_{ij}:=\frac{M^{\rm s}_{ij}}{A_{ij}}\in[-1,1],\qquad
I_{ij}=\rho_{ij}\pi_{ij}\kappa_{ij}A_{ij}.
\end{equation}
Under nonnegative coding, $\kappa_{ij}=1$ and the original profile is recovered. We do not explore this setting in the paper and leave for future work.

%%%%%%%%%%%%%%%%%%%%%%
\subsection{Optimality and consistency}

\fixedsuppoptimality*
\begin{proof}
For fixed support $S$, the least-squares objective is
\begin{equation}
\mathcal{L}(\z_S)
=
\frac{1}{2}\left\|\x-\D_S\z_S\right\|_2^2.
\end{equation}
At an interior optimum, the first-order optimality condition requires
\begin{equation}
\nabla_{\z_S}\mathcal{L}
=
-\D_S^\top\left(\x-\D_S\z_S\right)
=
-\D_S^\top\res
=
0.
\end{equation}
Hence,
\begin{equation}
\D_S^\top\res=0.
\end{equation}
For nonnegative codes, interiority ensures that no inequality constraint is active, i.e., $z_i>0$ for all $i\in S$, so the unconstrained first-order condition applies\footnote{The affine identity assumes unit slope on active coordinates. ReLU- and threshold-based encoders are affine on their active coordinates, while TopK-like encoders are affine within regions where the selected indices do not change~\citep{cunningham2023sparse,bricken2023monosemanticity,gao2025scaling,rajamanoharan2024jumping}. For BatchTopK, this holds conditional on the batch selection mask, hence, not point-wise sparsifier.}.
\end{proof}

\encconsidn*

\begin{proof}
Substitute $\x=\D\z+\res$ into $z_i=\w_i^\top\x+b_i$ on the active region, separate the $j=i$ term, and take the conditional expectation.
\end{proof}

%%%%%%%%%%%%%%%%%%%%%%
\subsection{Encoder-decoder duality}

Consider an untied encoder. The decoder Gram $\D^\top\D$ continues to determine decoder geometry, whereas the cross-Gram $\W^\top\D$ determines inference cross-talk. These two matrices need not have the same off-diagonal structure.

When $\D_S$ has full column rank, the least-squares code and its support-specific linear encoder are
\begin{equation}
    \z_S^*=(\D_S^\top\D_S)^{-1}\D_S^\top\x,
    \qquad
    \W_S^*=\D_S(\D_S^\top\D_S)^{-1},
    \qquad
    (\W_S^*)^\top\D_S=\eye,
    \label{eq:support-dual}
\end{equation}
where $\W_S^*$ is the ideal encoder conditioned on current active support.

%%%%%%%%%%%%%%%%%%%%%%
\subsection{Average co-activation probability pairs under uniformly distributed support}\label{app:barpi}
Here we derive the average co-activation probability pairs when the support is uniformly chosen across data examples. For a $k$-sparse autoencoder. Assume support is uniformly distributed across all features. For activation indicator $a_i$, a $k$-sparse code satisfy
\begin{equation}
\sum_{i\neq j}a_i a_j = \left(\sum_i a_i\right)^2-\sum_i a_i = k(k-1),
\end{equation}
resulting in sum co-activation pair probability of
\begin{equation}
\sum \pi_{ij} = \E[a_i a_j] = k (k-1).
\end{equation}
The average co-activation probability pairs from $\tilde p$ total used features
(i.e., $\tilde p = p - \texttt{\# dead neurons}$) would be
\begin{equation}
\overline{\pi_{ij}} = \frac{1}{\tilde p (\tilde p-1)}\sum \pi_{ij} = \frac{k (k-1)}{\tilde p (\tilde p - 1)}
\end{equation}
where $\tilde p (\tilde p - 1)$ are number of ordered pairs from $\tilde p$ features.

\newpage

%%%%%%%%%%%%%%%%%%%%%%%%%%%%%%%%%%
%%%%%%%%%%%%%%%%%%%%%%%%%%%%%%%%%%
%
\section{Appendix - Proof of \Cref{prop:local-orthogonalization}}
\label{app:feature-dynamics}

\localorth*

\begin{proof}
Since $S=\{i,j\}$ and the concept directions are unit norm, the quantity $\Delta(\D_S)$ reduces to
\begin{equation}
\Delta(\D_S)
=
2(\dvec_i^\top\dvec_j)^2
=
2\rho_{ij}^2.
\end{equation}
It is therefore sufficient to characterize the first-order change in $\rho_{ij}$ induced by one gradient step followed by column normalization.

In the tied, normalized and bias-free setting ($b_i=0$), the active codes satisfy
\begin{equation}
z_i=\dvec_i^\top\x,
\qquad
z_j=\dvec_j^\top\x, \label{eq:codes_tied}
\end{equation}
and the reconstruction is
\begin{equation}
\hat{\x}
=
\dvec_i z_i+\dvec_j z_j \label{eq:rec_k=2}.
\end{equation}
We first derive the normalized update of each active feature, then use it to obtain the corresponding dynamics of $\rho_{ij}$ and, consequently, of $\Delta(\D_S)$.

\paragraph{Normalized update.}

Let $\gvec_k:=\nabla_{\dvec_k}\mathcal{L}$ denote the gradient with respect to an active feature $\dvec_k$. A gradient step of size $\eta$ gives $\dvec_k-\eta\gvec_k$, which is subsequently projected back onto the unit sphere by column normalization. Expanding this normalized update to first order in $\eta$ yields
\begin{equation}
\dvec_k^+
=
\dvec_k
-
\eta
(\eye-\dvec_k\dvec_k^\top)\gvec_k
+
O(\eta^2).
\label{eq:norm_update}
\end{equation}
Thus, normalization removes the radial component of the gradient, and only its component tangent to the unit sphere contributes to the first-order evolution of the feature direction. We refer to Appendix~\ref{appendix:normalization} for the derivation of this expansion.

\paragraph{Projected gradients.}

The next step is to replace the gradients in the normalized update by their expressions for the tied, bias-free model. As derived in Appendix~\ref{app:gradient-computations}, for each active feature $\dvec_k$, we have
\begin{equation}
\gvec_k
=
(\hat{\x}-\x) z_k
+
\alpha_k\x\dvec_k^\top(\hat{\x}-\x)
\end{equation}

Using \Cref{eq:codes_tied}, \Cref{eq:rec_k=2} and the fact that $\alpha_k=1$, we get 
\begin{align}
\gvec_i
&=
(\dvec_i z_i+\dvec_j z_j-\x)z_i
+
\x\,\dvec_i^\top
(\dvec_i z_i+\dvec_j z_j-\x)
\nonumber\\
&=
\dvec_i z_i^2
+
\dvec_j z_i z_j
-
\x z_i
+
\x\left(
z_i+\rho_{ij}z_j-z_i
\right)
\nonumber\\
&=
\dvec_i z_i^2
+
\dvec_j z_i z_j
+
\x\left(\rho_{ij}z_j-z_i\right),
\end{align}
where we used $\|\dvec_i\|_2=1$. Projecting the gradient onto the tangent space of the unit sphere and using
\begin{equation}
(\eye-\dvec_i\dvec_i^\top)\dvec_i=0,
\qquad
(\eye-\dvec_i\dvec_i^\top)\dvec_j
=
\dvec_j-\rho_{ij}\dvec_i,
\end{equation}
we obtain
\begin{align}
(\eye-\dvec_i\dvec_i^\top)\gvec_i
&=
(\dvec_j-\rho_{ij}\dvec_i)z_i z_j
+
\left(\x-\dvec_i(\dvec_i^\top\x)\right)(\rho_{ij}z_j-z_i)\\
&=
(\dvec_j-\rho_{ij}\dvec_i)z_i z_j
+
\left(\x-\dvec_iz_i\right)(\rho_{ij}z_j-z_i)
.
\label{eq:proj-grad-i}
\end{align}

By symmetry,
\begin{equation}
(\eye-\dvec_j\dvec_j^\top)\gvec_j
=
(\dvec_i-\rho_{ij}\dvec_j)z_i z_j
+
(\x-z_j\dvec_j)(\rho_{ij}z_i-z_j)
.
\label{eq:proj-grad-j}
\end{equation}

\paragraph{Correlation dynamics.}

We now use the projected gradients to characterize how the correlation between the two active directions evolves after one normalized gradient step. Recall that $\rho_{ij}=\dvec_i^\top\dvec_j$. Therefore, using the first-order normalized update in \Cref{eq:norm_update}, the correlation after the update is
\begin{equation}
(\dvec_i^+)^\top\dvec_j^+
=
\dvec_i^\top\dvec_j
-
\eta
\dvec_i^\top
(\eye-\dvec_j\dvec_j^\top)\gvec_j
-
\eta
\dvec_j^\top
(\eye-\dvec_i\dvec_i^\top)\gvec_i
+
O(\eta^2).
\end{equation}

Substituting \Cref{eq:proj-grad-i} and \Cref{eq:proj-grad-j} yields the two symmetric contributions
\begin{align}
\dvec_i^\top
(\eye-\dvec_j\dvec_j^\top)\gvec_j
&=
(1-\rho_{ij}^2)z_i z_j + 
(z_i- z_j \rho_{ij})(\rho_{ij}z_i-z_j)
,
\\
\dvec_j^\top
(\eye-\dvec_i\dvec_i^\top)\gvec_i
&=
(1-\rho_{ij}^2)z_i z_j
+
(z_j-z_i\rho_{ij})(\rho_{ij}z_j-z_i).
\end{align}

Noticing that these two contributions are equal, summing them yields
\begin{align}
(\dvec_i^+)^\top\dvec_j^+
&=
\rho_{ij}
-
2\eta \left(
(1-\rho_{ij}^2)z_i z_j
+
(z_i-z_j\rho_{ij})(\rho_{ij}z_i-z_j)
\right)
+
O(\eta^2)
\\
&=
\rho_{ij}
-
2\eta \left(
z_i z_j
-
\rho_{ij}^2 z_i z_j
+
\rho_{ij}z_i^2
-
z_i z_j
-
\rho_{ij}^2 z_i z_j
+
\rho_{ij}z_j^2
\right)
+
O(\eta^2)
\\
&=
\rho_{ij}
-
2\eta \left(
\rho_{ij}z_i^2
+
\rho_{ij}z_j^2
-
2\rho_{ij}^2z_i z_j
\right)
+
O(\eta^2)
\\
&=
\rho_{ij}
-
2\eta \rho_{ij}
\left(
z_i^2
+
z_j^2
-
2\rho_{ij}z_i z_j
\right)
+
O(\eta^2)\label{eq:tiednormQ}
\\
&=
\rho_{ij}
-
2\eta \rho_{ij}
\left(
z_i^2\|\dvec_i\|_2^2
+
z_j^2\|\dvec_j\|_2^2
-
2z_i z_j\dvec_i^\top\dvec_j
\right)
+
O(\eta^2)
\\
&=
\rho_{ij}
-
2\eta \rho_{ij}
\left\|
z_i\dvec_i-z_j\dvec_j
\right\|_2^2
+
O(\eta^2).
\end{align}

Finally, substituting the correlation dynamics into $\Delta(\D_S)=2\rho_{ij}^2$ gives
\begin{align}
\Delta(\D_S^+)-\Delta(\D_S)
&=
2\bigl((\dvec_i^+)^\top\dvec_j^+\bigr)^2
-
2\rho_{ij}^2
\\
&=
2\left(
\rho_{ij}
-
2\eta \rho_{ij}
\left\|
z_i\dvec_i-z_j\dvec_j
\right\|_2^2
+
O(\eta^2)
\right)^2
-
2\rho_{ij}^2
\\
&=
-8\eta \rho_{ij}^2
\left\|
z_i\dvec_i-z_j\dvec_j
\right\|_2^2
+
O(\eta^2).
\end{align}

The first-order term is of order $\eta \rho_{ij}^2$, while the remainder is of
order $\eta^2$. Writing the remainder as $R(\eta)=O(\eta^2)$, there exist
constants $C>0$ and $\eta_0>0$ such that
\begin{equation}
    |R(\eta)|\leq C\eta^2,
    \qquad 0<\eta<\eta_0.
\end{equation}
Hence,
\begin{align}
\Delta(\D_S^+)-\Delta(\D_S)
&\leq
-8\eta \rho_{ij}^2
\left\|
z_i\dvec_i-z_j\dvec_j
\right\|_2^2
+
C\eta^2.
\end{align}
Therefore, the difference is non-positive whenever
\begin{equation}
    C\eta^2
    \leq
    8\eta \rho_{ij}^2
    \left\|
    z_i\dvec_i-z_j\dvec_j
    \right\|_2^2,
\end{equation}
or equivalently,
\begin{equation}
    \eta
    \leq
    \frac{8}{C}
    \rho_{ij}^2
    \left\|
    z_i\dvec_i-z_j\dvec_j
    \right\|_2^2.
\end{equation}
Provided
$\|z_i\dvec_i-z_j\dvec_j\|_2^2>0$, this quantity is a strictly positive
constant for fixed $z_i,z_j,\dvec_i,\dvec_j$. Thus, under the assumption
$\eta\ll \rho_{ij}^2$ and for sufficiently small $\eta<\eta_0$, the above
bound is satisfied. Consequently, the negative first-order term dominates
the $O(\eta^2)$ remainder, and
\begin{equation}
\Delta(\D_S^+)-\Delta(\D_S)\leq 0.
\end{equation}
This proves the result.
\end{proof}

\newpage

\newpage

%%%%%%%%%%%%%%%%%%%%%%%%%%%%%%%%%%
%%%%%%%%%%%%%%%%%%%%%%%%%%%%%%%%%%
%
\section{Appendix - \Cref{prop:local-orthogonalization-unnormalized}}\label{app:unormalized-tied-dynamics}

\begin{restatable}
[Local orthogonalization without normalization]{proposition}{localorthunnorm}
\label{prop:local-orthogonalization-unnormalized}
Consider the tied, bias-free setting with active support $S=\{i,j\}$ and $z_i z_j>0$. 
Let
\[
n_k=\|\dvec_k\|_2,\qquad
\tilde{\dvec}_k=\frac{\dvec_k}{n_k},\qquad
\tilde z_k=\tilde{\dvec}_k^\top\x=\frac{z_k}{n_k},
\qquad
\rho_{ij}=\tilde{\dvec}_i^\top\tilde{\dvec}_j,
\]
and let $\tilde{\D}$ denote the column-normalized version of $\D$. After one gradient step of size $\eta$ applied to the unnormalized dictionary $\D$,
\begin{equation}
\Delta(\tilde{\D}_S^+)-\Delta(\tilde{\D}_S)
=
-8\eta\rho_{ij}
\left[
\rho_{ij}(\tilde z_i^2+\tilde z_j^2)
+
\tilde z_i\tilde z_j
\left(
(n_i^2+n_j^2)(1-\rho_{ij}^2)-2
\right)
\right]
+O(\eta^2),
\label{eq:unnormalized-angular-dynamics}
\end{equation}
where $\D^+$ is the updated, unnormalized dictionary and $\tilde{\D}^+$ its column-normalized version.

Consequently, for sufficiently small $\eta$, $\Delta(\tilde{\D}_S)$ is guaranteed to be non-increasing to first order whenever
\begin{equation}
\begin{cases}
n_i^2+n_j^2 > \dfrac{2}{1+\rho_{ij}},
& 0<\rho_{ij}<1, \\[5pt]
n_i^2+n_j^2 < \dfrac{2}{1+\rho_{ij}},
& -1<\rho_{ij}<0.
\end{cases}
\label{eq:unnormalized-gram-condition}
\end{equation}
At $\rho_{ij}=0$, the first-order change vanishes; at $\rho_{ij}=1$, the two directions remain positively collinear. The case $\rho_{ij}=-1$ is incompatible with $z_i z_j>0$ for nonzero tied codes.
\end{restatable}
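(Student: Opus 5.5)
The plan mirrors the proof of \Cref{prop:local-orthogonalization}, except that there is no projection step: we update $\D$ directly and track the induced change of the normalized cosine $\rho_{ij}=\dvec_i^\top\dvec_j/(n_in_j)$.

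\textbf{Gradients.} In the tied, bias-free model $z_k=\dvec_k^\top\x$ and $\hat\x=\dvec_iz_i+\dvec_jz_j$. The gradient formula used before (with the encoder contribution retained, i.e.\ $\alpha_k=1$ for tied weights) gives
\[
\gvec_k=(\hat\x-\x)z_k+\x\,\dvec_k^\top(\hat\x-\x).
\]
\textbf{First-order change of the cosine.} For a perturbation $\dvec_k\mapsto\dvec_k-\eta\gvec_k$, the differential of the cosine is
\[
\delta\rho_{ij}=-\eta\Big[\tfrac{\tilde\dvec_j^\top(\eye-\tilde\dvec_i\tilde\dvec_i^\top)\gvec_i}{n_i}+\tfrac{\tilde\dvec_i^\top(\eye-\tilde\dvec_j\tilde\dvec_j^\top)\gvec_j}{n_j}\Big].
\]
This is the analogue of \eqref{eq:norm_update}: normalization is absorbed into the tangent projection, now scaled by $1/n_k$. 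Then
\[
\Delta(\tilde\D_S^+)-\Delta(\tilde\D_S)=4\rho_{ij}\,\delta\rho_{ij}+O(\eta^2),
\]
so it suffices to show
\[
\delta\rho_{ij}=-2\eta\Big[\rho_{ij}(\tilde z_i^2+\tilde z_j^2)+\tilde z_i\tilde z_j\big((n_i^2+n_j^2)(1-\rho_{ij}^2)-2\big)\Big].
\]

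\textbf{Evaluating the projections.} Write everything in normalized variables: $\dvec_k=n_k\tilde\dvec_k$, $z_k=n_k\tilde z_k$, $\tilde z_k=\tilde\dvec_k^\top\x$, and $\tilde\dvec_i^\top\tilde\dvec_j=\rho_{ij}$. The projector kills the $\dvec_i$ component of $\gvec_i$. What remains is:
\begin{itemize}
\item the cross term $\dvec_jz_iz_j$, which contributes $n_j n_i n_j\tilde z_i\tilde z_j(1-\rho_{ij}^2)$;
\item the $\x$ terms, through $\tilde\dvec_j^\top(\eye-\tilde\dvec_i\tilde\dvec_i^\top)\x=\tilde z_j-\rho_{ij}\tilde z_i$.
\end{itemize}
For the scalar multiplying $\x$, compute
\[
\dvec_i^\top(\hat\x-\x)-z_i=n_i\big(n_i^2\tilde z_i+n_in_j\rho_{ij}\tilde z_j-\tilde z_i\big)-n_i\tilde z_i.
\]
This needs care: the $-\x z_i$ piece and the $\x\,\dvec_i^\top(\hat\x-\x)$ piece combine. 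Divide by $n_i$, add the symmetric $j$ contribution, and collect terms in $\tilde z_i^2$, $\tilde z_j^2$ and $\tilde z_i\tilde z_j$. The squared terms should give $\rho_{ij}(\tilde z_i^2+\tilde z_j^2)$ and the mixed terms should give the bracket $(n_i^2+n_j^2)(1-\rho_{ij}^2)-2$. As a sanity check, setting $n_i=n_j=1$ must recover \eqref{eq:tiednormQ}, since then the bracket equals $-2\rho_{ij}^2$. I expect this algebra, especially the consistent tracking of powers of $n_k$, to be the main obstacle. If the powers do not match the stated form, I would recheck which gradient convention (per-column scaling) was assumed.

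\textbf{Sign condition.} Because $z_iz_j>0$, we have $\tilde z_i\tilde z_j>0$. Bound the squared part using $\tilde z_i^2+\tilde z_j^2\ge2\tilde z_i\tilde z_j$. The sign of the bracket is then controlled by
\[
\rho_{ij}\cdot 2\tilde z_i\tilde z_j\cdot\Big[\rho_{ij}+\tfrac12\big((n_i^2+n_j^2)(1-\rho_{ij}^2)-2\big)\Big].
\]
Factoring $1-\rho_{ij}^2=(1-\rho_{ij})(1+\rho_{ij})$ and using $2\rho_{ij}-2=-2(1-\rho_{ij})$, the inner term has the sign of $(1-\rho_{ij})\big((n_i^2+n_j^2)(1+\rho_{ij})-2\big)$ for $\rho_{ij}>0$. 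This yields the threshold $2/(1+\rho_{ij})$, which gives non-increase for $\rho_{ij}>0$.

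For $\rho_{ij}<0$ the inequality $\tilde z_i^2+\tilde z_j^2\ge2\tilde z_i\tilde z_j$ points the wrong way for $\rho_{ij}\tilde z^2$ terms. I would therefore redo the case by treating the whole expression $\rho_{ij}\cdot[\cdot]$ and checking that the reversed norm inequality suffices. If it does not, I would restrict to the claim as a sufficient condition under the stated inequality.

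\textbf{Degenerate cases.} At $\rho_{ij}=0$ the prefactor vanishes. At $\rho_{ij}=1$ the gradients keep the two columns collinear. The case $\rho_{ij}=-1$ forces $z_j=-(n_j/n_i)z_i$, which contradicts $z_iz_j>0$.
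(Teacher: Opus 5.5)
Your route is the paper's. Both expand the normalized direction as $\tilde{\dvec}_k^+=\tilde{\dvec}_k-\frac{\eta}{n_k}(\eye-\tilde{\dvec}_k\tilde{\dvec}_k^\top)\gvec_k+O(\eta^2)$, use the same tied gradient, projections and inner products, and finish with $4\rho_{ij}\,\delta\rho_{ij}$. Your AM--GM step is equivalent to the paper's exact rewriting of the bracket $B$ multiplying $-8\eta\rho_{ij}$:
$B=\rho_{ij}(\tilde z_i-\tilde z_j)^2+(1-\rho_{ij})\big((n_i^2+n_j^2)(1+\rho_{ij})-2\big)\tilde z_i\tilde z_j$.

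Two things need repair. First, the one intermediate you write out is wrong. Since $z_j\,\dvec_i^\top\dvec_j=(n_j\tilde z_j)(n_in_j\rho_{ij})$, the coefficient of $\x$ in $\gvec_i$ is $n_i\big((n_i^2-2)\tilde z_i+n_j^2\rho_{ij}\tilde z_j\big)$, with $n_j^2$ rather than $n_in_j$. With your version, the $\tilde z_j^2$ terms from the $i$- and $j$-contributions add to $\rho_{ij}(n_in_j-n_j^2+2)\tilde z_j^2$. That is not $2\rho_{ij}\tilde z_j^2$ unless $n_i=n_j$, and the mixed terms are off in the same way. With $n_j^2$ the norms cancel between the two contributions and the stated formula follows exactly.

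Second, you leave $-1<\rho_{ij}<0$ unresolved because you think $\tilde z_i^2+\tilde z_j^2\ge 2\tilde z_i\tilde z_j$ points the wrong way there. It points the right way. For $\rho_{ij}<0$ you need $B<0$, i.e.\ an upper bound on $B$. Multiplying by $\rho_{ij}<0$ gives exactly $\rho_{ij}(\tilde z_i^2+\tilde z_j^2)\le 2\rho_{ij}\tilde z_i\tilde z_j$. Hence $B\le(1-\rho_{ij})\big((n_i^2+n_j^2)(1+\rho_{ij})-2\big)\tilde z_i\tilde z_j<0$ whenever $n_i^2+n_j^2<2/(1+\rho_{ij})$, and so $-8\eta\rho_{ij}B<0$. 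In the paper's form, the term $\rho_{ij}(\tilde z_i-\tilde z_j)^2$ always has the sign of $\rho_{ij}$, so it helps in both cases.

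With these two fixes your argument is complete. Your boundary cases are fine: at $\rho_{ij}=1$ one checks directly from the gradient that $\dvec_i^+/n_i=\dvec_j^+/n_j$.
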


Importantly, the gradient update itself is performed on the unnormalized dictionary $\D$. We evaluate the resulting change through its column-normalized version $\tilde{\D}$ because we are interested in the evolution of the angle between feature directions, independently of changes in their column norms.

\begin{proof}
Since we are interested in the angular dynamics, we track the normalized
feature directions. After one gradient step on the unnormalized dictionary,
\begin{equation}
\tilde{\dvec}_i^+
=
\frac{\dvec_i^+}{\|\dvec_i^+\|_2}
=
\frac{\dvec_i-\eta\nabla_{\dvec_i}\mathcal{L}}
{\|\dvec_i-\eta\nabla_{\dvec_i}\mathcal{L}\|_2}.
\end{equation}

Dividing the numerator and denominator by $n_i$ and using
\Cref{eq:taylor} gives
\begin{align}
\tilde{\dvec}_i^+
&=
\frac{
\tilde{\dvec}_i-\frac{\eta}{n_i}\nabla_{\dvec_i}\mathcal{L}
}{
\|
\tilde{\dvec}_i-\frac{\eta}{n_i}\nabla_{\dvec_i}\mathcal{L}
\|_2
}
\\
&=
\left(
\tilde{\dvec}_i-\frac{\eta}{n_i}\nabla_{\dvec_i}\mathcal{L}
\right)
\left(
1
+
\frac{\eta}{n_i}
\tilde{\dvec}_i^\top
\nabla_{\dvec_i}\mathcal{L}
+
O(\eta^2)
\right)
\\
&=
\tilde{\dvec}_i
-
\frac{\eta}{n_i}
\left(
\eye-\tilde{\dvec}_i\tilde{\dvec}_i^\top
\right)
\nabla_{\dvec_i}\mathcal{L}
+
O(\eta^2).
\end{align}
Thus, to first order, only the component of the gradient orthogonal to
$\tilde{\dvec}_i$ changes the feature direction, while the radial component
changes only its norm.

Substituting the gradient from \Cref{eq:gradient_di_tied} yields
\begin{align}
\nabla_{\dvec_i}\mathcal{L}
&=
(\hat{\x}-\x)z_i
+
\x\dvec_i^\top(\hat{\x}-\x)
\\
&=
\left(
z_i\dvec_i+z_j\dvec_j-\x
\right) z_i
+
\x\dvec_i^\top\left(
z_i\dvec_i+z_j\dvec_j-\x
\right)
\\
&=
z_i^2\dvec_i
+
z_i z_j\dvec_j
-
z_i\x
+
\x\left(
z_i n_i^2
+
z_j n_i n_j\rho_{ij}
-
z_i
\right)
\\
&=
z_i^2\dvec_i
+
z_i z_j\dvec_j
+
\left[
(n_i^2-2)z_i
+
n_i n_j\rho_{ij}z_j
\right]\x .
\end{align}

We now project each term separately. First, since
\(\dvec_i=n_i\tilde{\dvec}_i\),
\begin{equation}
    \left(
        \eye-\tilde{\dvec}_i\tilde{\dvec}_i^\top
    \right)
    z_i^2\dvec_i
    =
    z_i^2
    \left(
        \eye-\tilde{\dvec}_i\tilde{\dvec}_i^\top
    \right)
    n_i\tilde{\dvec}_i=
    0 .
\end{equation}
Second,
\begin{align}
    \left(
        \eye-\tilde{\dvec}_i\tilde{\dvec}_i^\top
    \right)
    z_i z_j\dvec_j
    &=
    z_i z_j
    \left(
        \eye-\tilde{\dvec}_i\tilde{\dvec}_i^\top
    \right)
    n_j\tilde{\dvec}_j
    \\
    &=
    n_j z_i z_j
    \left(
        \tilde{\dvec}_j
        -
        \tilde{\dvec}_i\tilde{\dvec}_i^\top\tilde{\dvec}_j
    \right)
    =
    n_j z_i z_j
    \left(
        \tilde{\dvec}_j
        -
        \rho_{ij}\tilde{\dvec}_i
    \right).
\end{align}
Finally,
\begin{align}
\left(
\eye-\tilde{\dvec}_i\tilde{\dvec}_i^\top
\right)
\x
\left[
(n_i^2-2)z_i
+
n_i n_j\rho_{ij}z_j
\right]
&=
\left[
\x
-
\tilde{\dvec}_i
\left(
\tilde{\dvec}_i^\top\x
\right)
\right]
\left[
(n_i^2-2)z_i
+
n_i n_j\rho_{ij}z_j
\right]
\\
&=
\left(
\x-\tilde z_i\tilde{\dvec}_i
\right)
\left[
(n_i^2-2)z_i
+
n_i n_j\rho_{ij}z_j
\right].
\end{align}

Therefore,
\begin{equation}
\left(
\eye-\tilde{\dvec}_i\tilde{\dvec}_i^\top
\right)
\nabla_{\dvec_i}\mathcal{L}
=
n_j z_i z_j
\left(
\tilde{\dvec}_j
-
\rho_{ij}\tilde{\dvec}_i
\right)
+
\left[
(n_i^2-2)z_i
+
n_i n_j\rho_{ij}z_j
\right]
\left(
\x-\tilde z_i\tilde{\dvec}_i
\right).
\end{equation}

Substituting this expression into the first-order update of the normalized direction gives
\begin{equation}
\tilde{\dvec}_i^+
=
\tilde{\dvec}_i
-
\frac{\eta}{n_i}
\Bigg[
n_j z_i z_j
\left(
\tilde{\dvec}_j-\rho_{ij}\tilde{\dvec}_i
\right)
+
\left[
(n_i^2-2)z_i+n_i n_j\rho_{ij}z_j
\right]
\left(
\x-\tilde z_i\tilde{\dvec}_i
\right)
\Bigg]
+
O(\eta^2).
\end{equation}

By symmetry, the corresponding update for atom $j$ is
\begin{equation}
\tilde{\dvec}_j^+
=
\tilde{\dvec}_j
-
\frac{\eta}{n_j}
\Bigg[
n_i z_i z_j
\left(
\tilde{\dvec}_i-\rho_{ij}\tilde{\dvec}_j
\right)
+
\left[
(n_j^2-2)z_j+n_i n_j\rho_{ij}z_i
\right]
\left(
\x-\tilde z_j\tilde{\dvec}_j
\right)
\Bigg]
+
O(\eta^2).
\end{equation}

Given the required inner products, 
\begin{alignat}{2}
\tilde{\dvec}_j^\top\left(
\tilde{\dvec}_j
-
\rho_{ij}\tilde{\dvec}_i
\right)
&=
1-\rho_{ij}^2,
\qquad
&
\tilde{\dvec}_j^\top\left(
\x-\tilde z_i\tilde{\dvec}_i
\right)
&=
\tilde z_j-\rho_{ij}\tilde z_i,
\\
\tilde{\dvec}_i^\top
\left(
\tilde{\dvec}_i
-
\rho_{ij}\tilde{\dvec}_j
\right)
&=
1-\rho_{ij}^2,
\qquad
&
\tilde{\dvec}_i^\top
\left(
\x-\tilde z_j\tilde{\dvec}_j
\right)
&=
\tilde z_i-\rho_{ij}\tilde z_j.
\end{alignat}

We can now compute the first-order evolution of the normalized correlation
\(\rho_{ij}=\tilde{\dvec}_i^\top\tilde{\dvec}_j\). Using
\begin{equation}
    \rho_{ij}^+
    =
    (\tilde{\dvec}_i^+)^\top \tilde{\dvec}_j^+ ,
\end{equation}
and keeping only first-order terms in \(\eta\), we get
\begin{align}
\rho_{ij}^+
&=
\rho_{ij}
-
\frac{\eta}{n_i}
\Bigg[
n_j z_i z_j
\tilde{\dvec}_j^\top\left(
\tilde{\dvec}_j
-
\rho_{ij}\tilde{\dvec}_i
\right)
+
\left[
(n_i^2-2)z_i
+
n_i n_j\rho_{ij}z_j
\right]
\tilde{\dvec}_j^\top\left(
\x-\tilde z_i\tilde{\dvec}_i
\right)
\Bigg]
\nonumber\\
&\quad
-
\frac{\eta}{n_j}
\Bigg[
n_i z_i z_j
\tilde{\dvec}_i^\top
\left(
\tilde{\dvec}_i
-
\rho_{ij}\tilde{\dvec}_j
\right)
+
\left[
(n_j^2-2)z_j
+
n_i n_j\rho_{ij}z_i
\right]
\tilde{\dvec}_i^\top
\left(
\x-\tilde z_j\tilde{\dvec}_j
\right)
\Bigg]
+
O(\eta^2)\nonumber
\\
&=
\rho_{ij}
-
\eta z_i z_j
\left(
\frac{n_j}{n_i}
+
\frac{n_i}{n_j}
\right)
\left(
1-\rho_{ij}^2
\right)
\nonumber\\
&\quad
-
\frac{\eta}{n_i}
\left[
(n_i^2-2)z_i
+
n_i n_j\rho_{ij}z_j
\right]
\left(
\tilde z_j-\rho_{ij}\tilde z_i
\right)
\nonumber\\
&\quad
-
\frac{\eta}{n_j}
\left[
(n_j^2-2)z_j
+
n_i n_j\rho_{ij}z_i
\right]
\left(
\tilde z_i-\rho_{ij}\tilde z_j
\right)
+
O(\eta^2).
\nonumber
\\
&=
\rho_{ij}
-
\eta \tilde z_i\tilde z_j
\left(
n_i^2+n_j^2
\right)
\left(
1-\rho_{ij}^2
\right)
\nonumber\\
&\quad
-
\eta
\left[
(n_i^2-2)\tilde z_i
+
n_j^2\rho_{ij}\tilde z_j
\right]
\left(
\tilde z_j-\rho_{ij}\tilde z_i
\right)
\nonumber\\
&\quad
-
\eta
\left[
(n_j^2-2)\tilde z_j
+
n_i^2\rho_{ij}\tilde z_i
\right]
\left(
\tilde z_i-\rho_{ij}\tilde z_j
\right)
+
O(\eta^2)
\\
&=
\rho_{ij}
-
\eta \tilde z_i\tilde z_j
(n_i^2+n_j^2)
(1-\rho_{ij}^2)
\nonumber\\
&\quad
-
\eta
\Big[
(n_i^2-2)\tilde z_i\tilde z_j
-
(n_i^2-2)\rho_{ij}\tilde z_i^2
+
n_j^2\rho_{ij}\tilde z_j^2
-
n_j^2\rho_{ij}^2\tilde z_i\tilde z_j
\Big]
\nonumber\\
&\quad
-
\eta
\Big[
(n_j^2-2)\tilde z_i\tilde z_j
-
(n_j^2-2)\rho_{ij}\tilde z_j^2
+
n_i^2\rho_{ij}\tilde z_i^2
-
n_i^2\rho_{ij}^2\tilde z_i\tilde z_j
\Big]
+
O(\eta^2)
\\
&=
\rho_{ij}
-
\eta \tilde z_i\tilde z_j
(n_i^2+n_j^2)
(1-\rho_{ij}^2)
\nonumber\\
&\quad
-
\eta
\Big[
(n_i^2+n_j^2-4)\tilde z_i\tilde z_j
-
(n_i^2+n_j^2)\rho_{ij}^2\tilde z_i\tilde z_j
+
2\rho_{ij}
(\tilde z_i^2+\tilde z_j^2)
\Big]
+
O(\eta^2)
\\
&=
\rho_{ij}
-
\eta
\Big[
(n_i^2+n_j^2)(1-\rho_{ij}^2)\tilde z_i\tilde z_j
+
(n_i^2+n_j^2-4)\tilde z_i\tilde z_j
\nonumber\\
&\qquad\qquad
-
(n_i^2+n_j^2)\rho_{ij}^2\tilde z_i\tilde z_j
+
2\rho_{ij}(\tilde z_i^2+\tilde z_j^2)
\Big]
+
O(\eta^2)
\\
&=
\rho_{ij}
-
2\eta
\Big[
\rho_{ij}
(\tilde z_i^2+\tilde z_j^2)
+
\tilde z_i\tilde z_j
\big(
(n_i^2+n_j^2)(1-\rho_{ij}^2)-2
\big)
\Big]
+
O(\eta^2). 
\end{align}
Setting $n_i=n_j=1$ recovers the normalized case in
\Cref{eq:tiednormQ}. 

Since
\begin{equation}
\Delta(\tilde{\D}_S)=2\rho_{ij}^2,
\end{equation}
the first-order change in interference measure is
\begin{align}
\Delta(\tilde{\D}_S^+)-\Delta(\tilde{\D}_S)
&=
-8\eta\rho_{ij}
\Big[
\rho_{ij}
(\tilde z_i^2+\tilde z_j^2)
\nonumber\\
&\qquad\qquad
+
\tilde z_i\tilde z_j
\big(
(n_i^2+n_j^2)(1-\rho_{ij}^2)-2
\big)
\Big]
+
O(\eta^2).
\end{align}

Hence, the interference measure decreases to first order whenever
\begin{equation}
\rho_{ij}
\Big[
\rho_{ij}(\tilde z_i^2+\tilde z_j^2)
+
\tilde z_i\tilde z_j
\big(
(n_i^2+n_j^2)(1-\rho_{ij}^2)-2
\big)
\Big]
>0.
\end{equation}
Using
\begin{align}
&\rho_{ij}
\left(
\tilde z_i^2+\tilde z_j^2
\right)
+
\tilde z_i\tilde z_j
\left[
(n_i^2+n_j^2)(1-\rho_{ij}^2)-2
\right]
\\
&=
\rho_{ij}
\left(
\tilde z_i^2+\tilde z_j^2-2\tilde z_i\tilde z_j
\right)
+
\tilde z_i\tilde z_j
\left[
(n_i^2+n_j^2)(1-\rho_{ij}^2)
-2
+2\rho_{ij}
\right]
\\
&=
\rho_{ij}
\left(
\tilde z_i-\tilde z_j
\right)^2
+
\tilde z_i\tilde z_j
\left[
(n_i^2+n_j^2)(1-\rho_{ij}^2)
-2(1-\rho_{ij})
\right]
\\
&=
\rho_{ij}
\left(
\tilde z_i-\tilde z_j
\right)^2
+
(1-\rho_{ij})
\left[
(n_i^2+n_j^2)(1+\rho_{ij})-2
\right]
\tilde z_i\tilde z_j.
\end{align}
and assuming $\tilde z_i\tilde z_j>0$, the exact threshold is
\begin{equation}
\begin{cases}
n_i^2+n_j^2>\;\dfrac{2}{1+\rho_{ij}}
-
\dfrac{\rho_{ij}}{1-\rho_{ij}^2}
\dfrac{(\tilde z_i-\tilde z_j)^2}{\tilde z_i\tilde z_j},
& 0<\rho_{ij}<1,\\[10pt]
n_i^2+n_j^2<\;\dfrac{2}{1+\rho_{ij}}
-
\dfrac{\rho_{ij}}{1-\rho_{ij}^2}
\dfrac{(\tilde z_i-\tilde z_j)^2}{\tilde z_i\tilde z_j},
& -1<\rho_{ij}<0.
\end{cases}
\end{equation}
The correction term always shifts the threshold in the direction that makes the inequality easier to satisfy. Therefore, a $\tilde z_i,\tilde z_j$-independent sufficient condition is
\begin{equation}
\begin{cases}
n_i^2+n_j^2 > \dfrac{2}{1+\rho_{ij}},
& 0<\rho_{ij}<1,\\[5pt]
n_i^2+n_j^2 < \dfrac{2}{1+\rho_{ij}},
& -1<\rho_{ij}<0.
\end{cases}
\end{equation}
This guarantees a first-order decrease for any $\tilde z_i\tilde z_j>0$.

The boundary cases are immediate. At $\rho_{ij}=0$, the first-order change vanishes. At $\rho_{ij}=1$, the normalized feature directions remain positively collinear, so $\rho_{ij}^+=1$ exactly. The case $\rho_{ij}=-1$ is incompatible with $\tilde z_i\tilde z_j>0$ for nonzero tied codes.
\end{proof}

When $n_i=n_j=1$, the sufficient condition holds for all $-1<\rho_{ij}<1$, $\rho_{ij}\neq0$, recovering the normalized result. Without normalization, however, the angular dynamics also depend on $n_i^2+n_j^2$. Thus, gradient descent need not enforce local orthogonalization: feature norms provide an additional degree of freedom through which interference can be reduced.

\newpage
\section{Appendix - Column Normalization}
\label{appendix:normalization}
The mean squared error objective does not uniquely determine the scale of the decoder directions, since scaling a decoder feature $\dvec_i$ by a constant and inversely scaling its activation $z_i$ leaves the reconstruction unchanged. To remove this scaling ambiguity, sparse dictionary learning commonly constrain each decoder feature to have unit norm. In practice, SAE implementations typically enforce this constraint through two operations applied sequentially at each training step \citep{cunningham2023sparse,bricken2023monosemanticity,gao2025scaling}. First, the component of the gradient parallel to each decoder column is projected out, so that the optimizer update is tangent to the unit sphere. After the optimizer step, each decoder column is explicitly renormalized to unit norm.  

The gradient projection alone does not enforce the constraint exactly: even a tangent update can cause the feature norms to drift over long training horizons \citep{douglas2002gradient}. It nevertheless ensures that the gradient passed to adaptive optimizers such as Adam \citep{kingma2014adam} is consistent with the unit-norm constraint, rather than relying on a subsequent renormalization to correct an unconstrained update. This has been observed empirically to yield small but consistent improvements in the optimization objective \citep{bricken2023monosemanticity}. The subsequent explicit renormalization then corrects the remaining norm drift and restores unit norm exactly after every optimizer step.

In the following, we make the relationship between these two sequential operations explicit by deriving the first-order expansion of a renormalized gradient update. We show that, to the first order, explicit renormalization itself induces a projection of the gradient onto the tangent space of the unit sphere, with the discrepancy appearing only at higher order.

Consider a dictionary feature $\dvec_i$ satisfying $\|\dvec_i\|_2=1$, and denote
its gradient by
\begin{equation}
    \gvec_i := \nabla_{\dvec_i}\mathcal{L}.
\end{equation}
After a gradient step with step size $\eta>0$ followed by renormalization, the
updated feature is
\begin{equation}
    \dvec_i^+
    =
    \frac{\dvec_i-\eta \gvec_i}
    {\|\dvec_i-\eta \gvec_i\|_2}.
\end{equation}
The normalization factor satisfies
\begin{align}
    \|\dvec_i-\eta \gvec_i\|_2
    &=
    \sqrt{
    (\dvec_i-\eta \gvec_i)^\top
    (\dvec_i-\eta \gvec_i)
    } \\
    &=
    \sqrt{
    1
    -2\eta \gvec_i^\top\dvec_i
    +\eta^2 \gvec_i^\top\gvec_i
    }.
\end{align}
Define
\begin{equation}
   \alpha_i
    :=
    2\eta \gvec_i^\top\dvec_i
    -
    \eta^2 \gvec_i^\top\gvec_i .
\end{equation}
Then
\begin{equation}
    \frac{1}{\|\dvec_i-\eta \gvec_i\|_2}
    =
    (1-\alpha_i)^{-1/2}. \label{eq:taylor}
\end{equation}
Using Taylor's theorem with Lagrange remainder,
\begin{equation}
    (1-   \zeta_i)^{-1/2}
    =
    1
    +
    \frac{   \zeta_i}{2}
    +
    \frac{3   \zeta_i^2}{8}(1-\xi)^{-5/2},
\end{equation}
for some $\xi$ between $0$ and $   \zeta_i$. Substituting the expression for $   \zeta_i$, we get
\begin{align}
    \dvec_i^+
    &=
    (\dvec_i-\eta \gvec_i)
    \Bigg[
    1
    +
    \eta \gvec_i^\top\dvec_i
    -
    \frac{\eta^2}{2}\gvec_i^\top\gvec_i
    \nonumber \\
    &\qquad\qquad
    +
    \frac{3}{8}
    \left(
        2\eta \gvec_i^\top\dvec_i
        -
        \eta^2 \gvec_i^\top\gvec_i
    \right)^2
    (1-\xi)^{-5/2}
    \Bigg].
\end{align}
Keeping the first-order terms in $\eta$ yields
\begin{align}
    \dvec_i^+
    &=
    \dvec_i
    -
    \eta \gvec_i
    +
    \eta \dvec_i\dvec_i^\top\gvec_i
    +
    R_i \\
    &=
    \dvec_i
    -
    \eta
    \left(
        \eye-\dvec_i\dvec_i^\top
    \right)\gvec_i
    +
    R_i,
\end{align}
where $R_i$ collects the second- and higher-order terms. More explicitly,
\begin{align}
    R_i
    &=
    -\eta^2 \gvec_i(\gvec_i^\top\dvec_i)
    -
    \frac{\eta^2}{2}\dvec_i(\gvec_i^\top\gvec_i)
    \nonumber \\
    &\quad
    +
    \frac{3}{8}
    (\dvec_i-\eta\gvec_i)
    \left(
        2\eta \gvec_i^\top\dvec_i
        -
        \eta^2 \gvec_i^\top\gvec_i
    \right)^2
    (1-\xi)^{-5/2},
\end{align}
for some $\xi$ between $0$ and
$2\eta \gvec_i^\top\dvec_i-\eta^2\gvec_i^\top\gvec_i$.

Thus, to first order in the learning rate, explicit renormalization is
equivalent to replacing the gradient $\gvec_i$ by its projection onto the
tangent space of the unit sphere at $\dvec_i$:
\begin{equation}
    \gvec_i
    \longmapsto
    \left(
        \eye-\dvec_i\dvec_i^\top
    \right)\gvec_i.
\end{equation}
Therefore, explicit normalization and tangent-space gradient projection induce
the same leading-order dynamics on the unit sphere. Their difference is
captured by the higher-order remainder $R_i=O(\eta^2)$.

\paragraph{$\ell_1$ regularization.}
The unit-norm constraint is particularly important for SAEs trained with $\ell_1$ regularization, as otherwise the sparsity penalty can be reduced by increasing the decoder norms while proportionally decreasing the codes, without changing the reconstruction. Recent work has proposed removing the unit-norm constraint by instead using the reparameterization-invariant regularizer $\lambda \sum_{i\in S} \|\dvec_i\|_2 z_i$ \citep{rajamanoharan2024jumping,Conerly2024}.

%%%%%%%%%%%%%%%%%%%%%%%%%%%%%%%%%%
%%%%%%%%%%%%%%%%%%%%%%%%%%%%%%%%%%
%

\newpage 

\section{Appendix - Gradient computations}
\label{app:gradient-computations}

In this appendix, we derive the gradient expressions used throughout the proofs. We consider a single input $\x$ and define its active support as
\begin{equation}
S(\x) := \{i : z_i(\x) \neq 0\}.
\end{equation}
We treat the active support as locally fixed throughout the derivation, such that inactive features receive zero gradient. The reconstruction is then
\begin{equation}
\hat{\x}
=
\sum_{i \in S(\x)} \dvec_i z_i,
\end{equation}
and we consider the reconstruction loss
\begin{equation}
\mathcal{L}(\x)
=
\frac{1}{2}\|\hat{\x}-\x\|_2^2.
\end{equation}

In the tied setting, the encoder and decoder weights are shared, such that $\W = \D$. For an active feature $i\in S(\x)$, the activation is
\begin{equation}
z_i = \dvec_i^\top \x + b_i,
\end{equation}
and the reconstruction is therefore
\begin{equation}
\hat{\x}
=
\sum_{i\in S(\x)} \dvec_i z_i
=
\sum_{i\in S(\x)}
\dvec_i\left(\dvec_i^\top\x+b_i\right).
\end{equation}
The reconstruction loss is
\begin{equation}
\mathcal{L}(\x)
=
\frac{1}{2}\|\hat{\x}-\x\|_2^2
=
\frac{1}{2}
\big\|
\sum_{i\in S(\x)}
\dvec_i\left(\dvec_i^\top\x+b_i\right)-\x
\big\|_2^2.
\end{equation}

\paragraph{Gradient with respect to a dictionary feature.}
Because the weights are tied, $\dvec_i$ affects the reconstruction in two ways: directly through the decoder and indirectly through the activation $z_i$. For an active feature $i \in S(\x)$, we have
\begin{equation}
\frac{\partial z_i}{\partial \dvec_i}
=
\x^\top.
\end{equation}
Using the product rule, the derivative of the reconstruction with respect to $\dvec_i$ is therefore
\begin{equation}
\frac{\partial \hat{\x}}{\partial \dvec_i}
=
\frac{\partial}{\partial \dvec_i}
\left(\dvec_i z_i\right) \\
=
\left(\frac{\partial \dvec_i}{\partial \dvec_i}\right) z_i 
+
\dvec_i\left(\frac{\partial z_i}{\partial \dvec_i}\right)
=
\eye z_i
+
\dvec_i \x^\top.
\end{equation}

Since the gradient of the reconstruction loss with respect to $\hat{\x}$ is $\hat{\x}-\x$, applying the chain rule yields
\begin{equation}
\frac{\partial \mathcal{L}}{\partial \dvec_i}
=
(\hat{\x}-\x)^\top
\left(
\frac{\partial\hat{\x}}{\partial\dvec_i}
\right)
=
(\hat{\x}-\x)^\top\left(
\eye z_i+\dvec_i\x^\top
\right)
=
(\hat{\x}-\x)^\top z_i
+
\left((\hat{\x}-\x)^\top\dvec_i \right) \x^\top.
\end{equation}
Hence,
\begin{equation}
    \nabla_{\dvec_i}\mathcal{L}
    =
    \begin{cases}
    (\hat{\x}-\x) z_i
    +
    \x\dvec_i^\top(\hat{\x}-\x)
    & i\in S(\x),
    \\
    0,
    & i\notin S(\x).
    \end{cases} \label{eq:gradient_di_tied}
\end{equation}
The first term corresponds to the contribution of $\dvec_i$ through the decoder, while the second arises from its contribution through the tied encoder.

\newpage
\section{Appendix - Experimental Details}
\label{app:exp-details}

\textbf{Model and data.}\quad
We use Pythia-160M-deduped \citep{biderman2023pythiasuiteanalyzinglarge}, taking residual-stream activations after layer 8 (\texttt{blocks.8.hook\_resid\_post}, $m=768$). Activations come from the uncopyrighted Pile \citep{gao2020pile800gbdatasetdiverse}, using 1024-token contexts with a prepended BOS token, and are cached with SAELens \citep{bloom2024saetrainingcodebase}. We center activations by the training-set geometric median as in \citep{gao2025scaling}, but compute $\bvec_{\text{pre}}$ once and keep it fixed. Throughout, $\x=\x_{\mathrm{raw}}-\bvec_{\text{pre}}$; no other normalization is applied.

\textbf{Training.}\quad
Each SAE is trained for 200M tokens (48{,}828 steps of $2^{14}$ tokens) on disjoint activations and evaluated on a fixed, disjoint 2M-token holdout set.\footnote{\citep{karvonen2025saebench} report that many SAE evaluation metrics achieve most of their performance by 50M training tokens.} We use Adam \citep{kingma2014adam} with learning rate $2\times10^{-4}$, $(\beta_1,\beta_2)=(0.9,0.999)$, $\epsilon=10^{-8}$, no weight decay, schedule, or warmup.

\textbf{Architecture.}\quad
All models use
\begin{equation}
\z=\sigma\!\left(\mathrm{ReLU}\!\left(\W^\top\x+\bvec\right)\right),
\qquad
\hat{\x}=\D\z,
\end{equation}
with $\D\in\R^{m\times p}$ unit-norm columnwise, $p=16{,}384$, and $k=40$. We compare TopK \citep{gao2025scaling}, BatchTopK \citep{bussmann2024batchtopk}, JumpReLU \citep{rajamanoharan2024jumping}, and Matryoshka \citep{bussmann2025learning}. For JumpReLU, we enforce sparsity through L0 target regularization. For Matryoshka, we use the SAEBench block boundaries at $512$, $1536$, $3584$, and $7680$ features \citep{karvonen2025saebench}.

\textbf{Initialization and constraints.}\quad
$\D$ is Gaussian-initialized and column-normalized, with $\W=\D$ and $\bvec=\mathbf{0}$ at initialization. Decoder columns remain unit norm by projecting out the radial gradient component and renormalizing after each update (\Cref{appendix:normalization}).

We parameterize the encoder weight as $\W=\boldsymbol{\alpha}\odot\D$ or $\W=\D$, where $\boldsymbol{\alpha}\in\R^p$ is a separately learnable, per-feature gain (initialized as $\boldsymbol{\alpha}=\mathbf{1}$) that rescales each column of $\D$ independently of its unit-norm constraint. We consider five settings:
\begin{enumerate}[label=(\roman*)]
    \item \emph{tied}: $\W=\D$, $\boldsymbol{\alpha}=\mathbf{1}$, $\bvec=\mathbf{0}$;
    \item \emph{tied + bias}: $\W=\D$, $\boldsymbol{\alpha}=\mathbf{1}$, with $\bvec$ learned;
    \item \emph{tied + gain}: $\W=\boldsymbol{\alpha}\odot\D$, with $\boldsymbol{\alpha}$ learned and $\bvec=\mathbf{0}$;
    \item \emph{tied + gain + bias}: $\W=\boldsymbol{\alpha}\odot\D$, with both $\boldsymbol{\alpha}$ and $\bvec$ learned;
    \item \emph{untied}: $\W$ is learned independently, with $\boldsymbol{\alpha}=\mathbf{1}$ and $\bvec=\mathbf{0}$.
\end{enumerate}
In the \emph{tied + gain} settings, this lets us isolate the effect of decoupling the encoder's per-feature scale from the decoder's direction, without also decoupling the direction itself. In the untied setting, $\W$ already carries per-feature scale, so we do not additionally learn $\boldsymbol{\alpha}$.

\textbf{Seeds.}\quad
\Cref{fig:local-orthogonalization} averages over three seeds, varying both initialization and token order; all other experiments use a single seed. Although the aggregate geometric and effective interference patterns in this figure are stable across seeds, the learned dictionaries differ substantially, consistent with previous observations \citep{fel2025archetypal, nelson2026toward, song-etal-2026-mechanistic, paulo2026sparse}. For the runs in \Cref{fig:local-orthogonalization}, Table~\ref{tab:cross-seed} quantifies this using mean maximum cosine similarity,
\begin{equation}
\mathrm{MMCS}=\frac{1}{p}\sum_i \max_j \cos(\dvec_i^A,\dvec_j^B),
\end{equation}
and its absolute-value counterpart.

\begin{table}[H]
\centering
\small
\begin{tabular}{lrrrrr}
\toprule
& $\mathrm{MMCS}$ & median & $\mathrm{MMCS}_{|\cdot|}$ & $>\!0.9$ & $>\!0.7$ \\
\midrule
JumpReLU   & $0.711$ & $0.785$ & $0.712$ & $24.8\%$ & $61.1\%$ \\
TopK       & $0.548$ & $0.533$ & $0.552$ & $8.9\%$  & $31.2\%$ \\
BatchTopK  & $0.436$ & $0.335$ & $0.444$ & $6.0\%$  & $18.5\%$ \\
Matryoshka & $0.425$ & $0.332$ & $0.433$ & $3.1\%$  & $15.8\%$ \\
\bottomrule
\end{tabular}
\caption{Cross-seed dictionary agreement for the runs in \Cref{fig:local-orthogonalization}, averaged over seed pairs $(0,1)$, $(0,2)$, and $(1,2)$.}
\label{tab:cross-seed}
\end{table}

\textbf{Evaluation.}\quad
We report fraction of variance unexplained (FVU), average sparsity $\mathbb E\|\z\|_0$, and the fraction of features that never activate on the holdout set. Sparsity is closely matched across settings, while FVU generally decreases as constraints are relaxed.

\begin{table}[H]
\centering
\footnotesize
\setlength{\tabcolsep}{3.5pt}
\begin{tabular}{lrrrrrrrrrrrr}
\toprule
& \multicolumn{3}{c}{TopK} & \multicolumn{3}{c}{BatchTopK}
& \multicolumn{3}{c}{Matryoshka} & \multicolumn{3}{c}{JumpReLU} \\
\cmidrule(lr){2-4}\cmidrule(lr){5-7}\cmidrule(lr){8-10}\cmidrule(lr){11-13}
Constraint & FVU & L0 & Dead & FVU & L0 & Dead & FVU & L0 & Dead & FVU & L0 & Dead \\
\midrule
(i) Tied               & .118 & 40.0 & 0.00 & .124 & 40.9 & 0.00 & .126 & 40.7 & 0.00 & .100 & 45.6 & 0.02 \\
(ii) Tied + Bias       & .086 & 40.0 & 0.00 & .085 & 40.1 & 0.00 & .091 & 40.1 & 0.00 & .086 & 45.5 & 0.48 \\
(iii) Tied + Gain      & .091 & 40.0 & 0.00 & .092 & 40.0 & 0.12 & .096 & 40.2 & 0.01 & .087 & 45.4 & 0.62 \\
(iv) Tied + Gain + Bias& .085 & 40.0 & 0.01 & .084 & 40.0 & 0.02 & .090 & 40.1 & 0.01 & .085 & 45.4 & 0.90 \\
(v) Untied             & .083 & 40.0 & 0.54 & .084 & 40.8 & 5.41 & .088 & 40.6 & 6.05 & .083 & 45.0 & 17.62 \\
\bottomrule
\end{tabular}
\caption{Holdout evaluation after 200M training tokens. ``Dead'' is the percentage of features that never activate on the holdout set.}
\label{tab:eval}
\end{table}

\newpage

\section{Appendix - Additional Results}\label{app:additional results}
\subsection{Extended results}
\begin{figure}[H]
    \centering
    \includegraphics[height=0.77\linewidth, angle=-90]{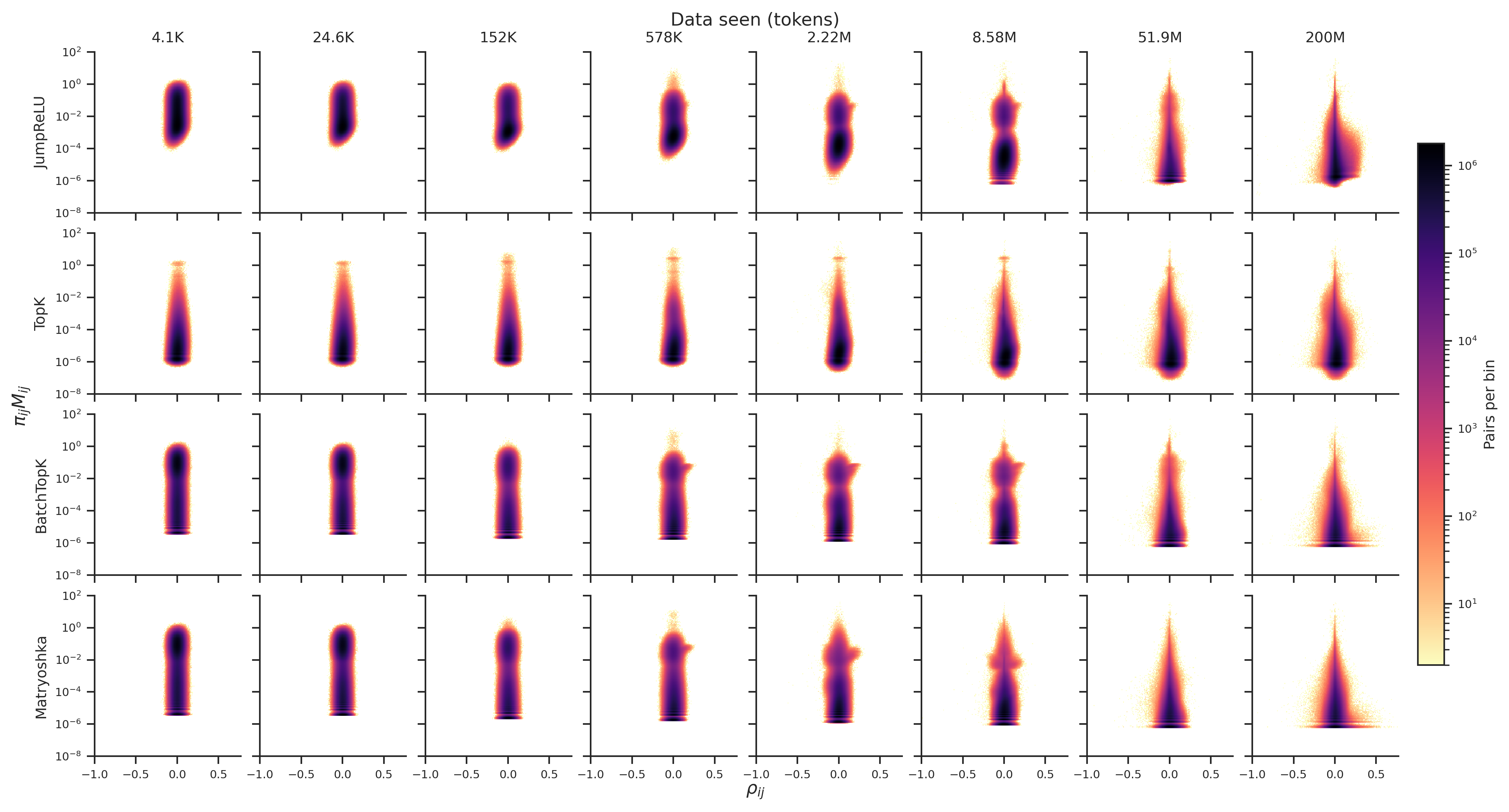}
    \caption{\textbf{Transition of interference factors as training progress.} This figure complements~\Cref{fig:local-orthogonalization}.}
    \label{fig:local-orthogonalization-flame}
\end{figure}

\begin{figure}[H]
    \centering
    \includegraphics[width=\linewidth]{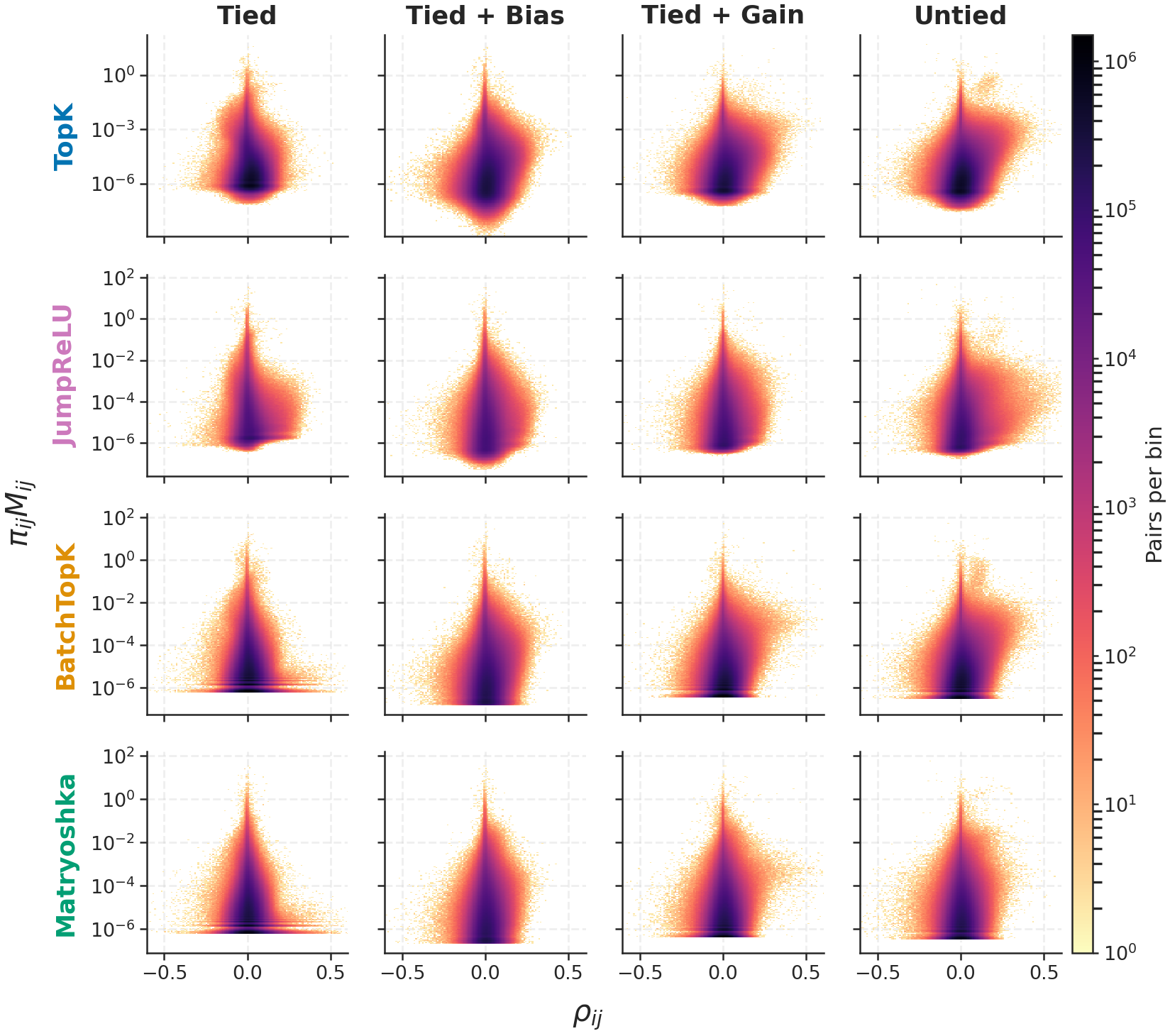}
    \caption{\textbf{Consistent trend on architectural freedom shifting co-active feature pairs toward constructive decoder interactions}. This figure complements~\Cref{fig:topk-architectural-regimes}.}
    \label{fig:architectural-regimes}
\end{figure}

\begin{figure}[H]
    \centering
    \includegraphics[width=\linewidth]{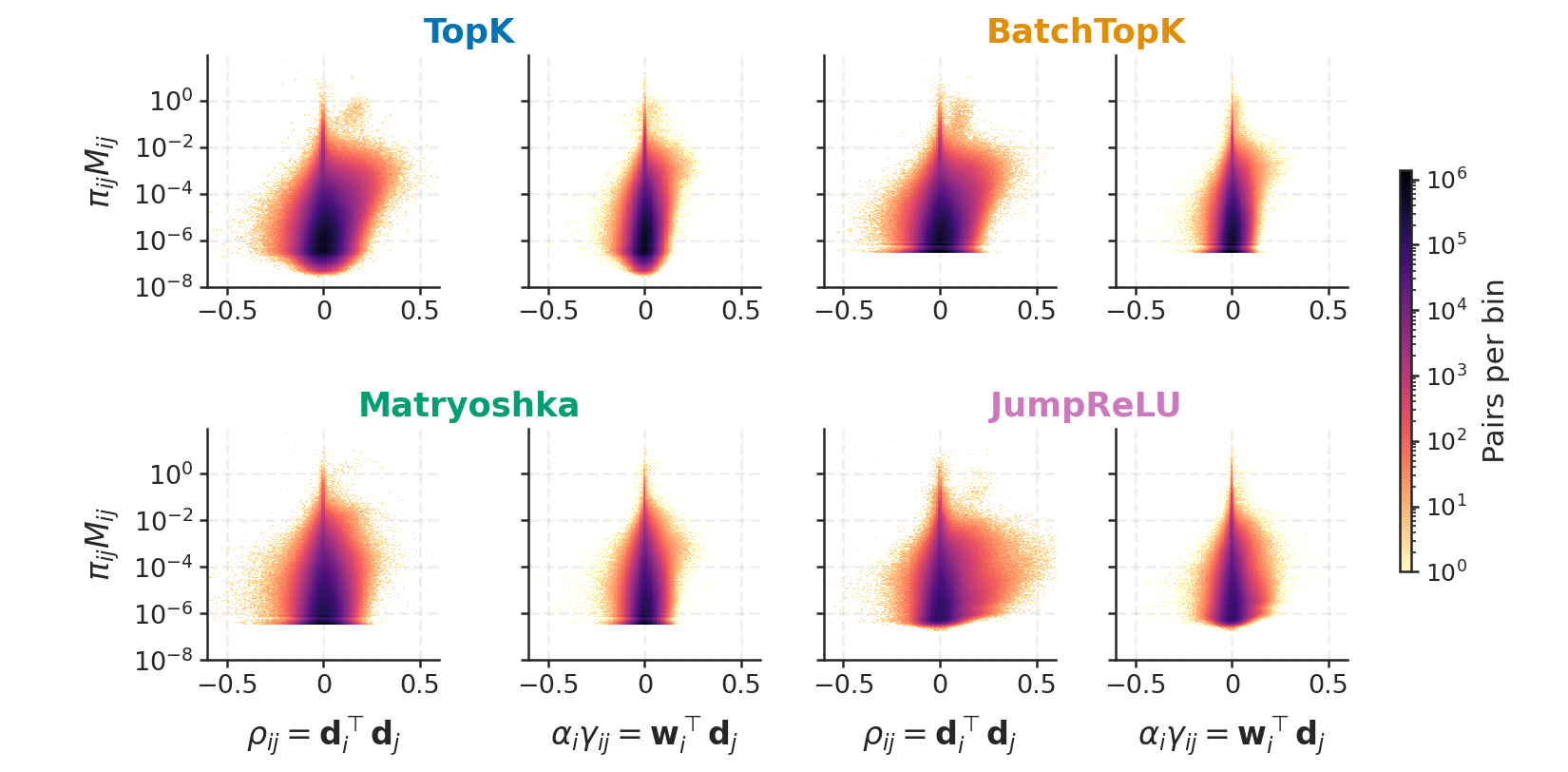}
    \caption{\textbf{The untying mechanism to separate constructive decoder interactions from encoder cross-talk is consistent across SAEs, including TopK, BatchTopK, Matryoshka, and JumpReLU.} This figure complements~\Cref{fig:encdec}.}
    \label{fig:bias-alpha-more}
\end{figure}

\subsection{Interference under varying sparsity and dictionary width}
\label{app:kp-profile}
\vspace{-3mm}
\begin{figure}[H]
    \centering
    \includegraphics[width=\linewidth]{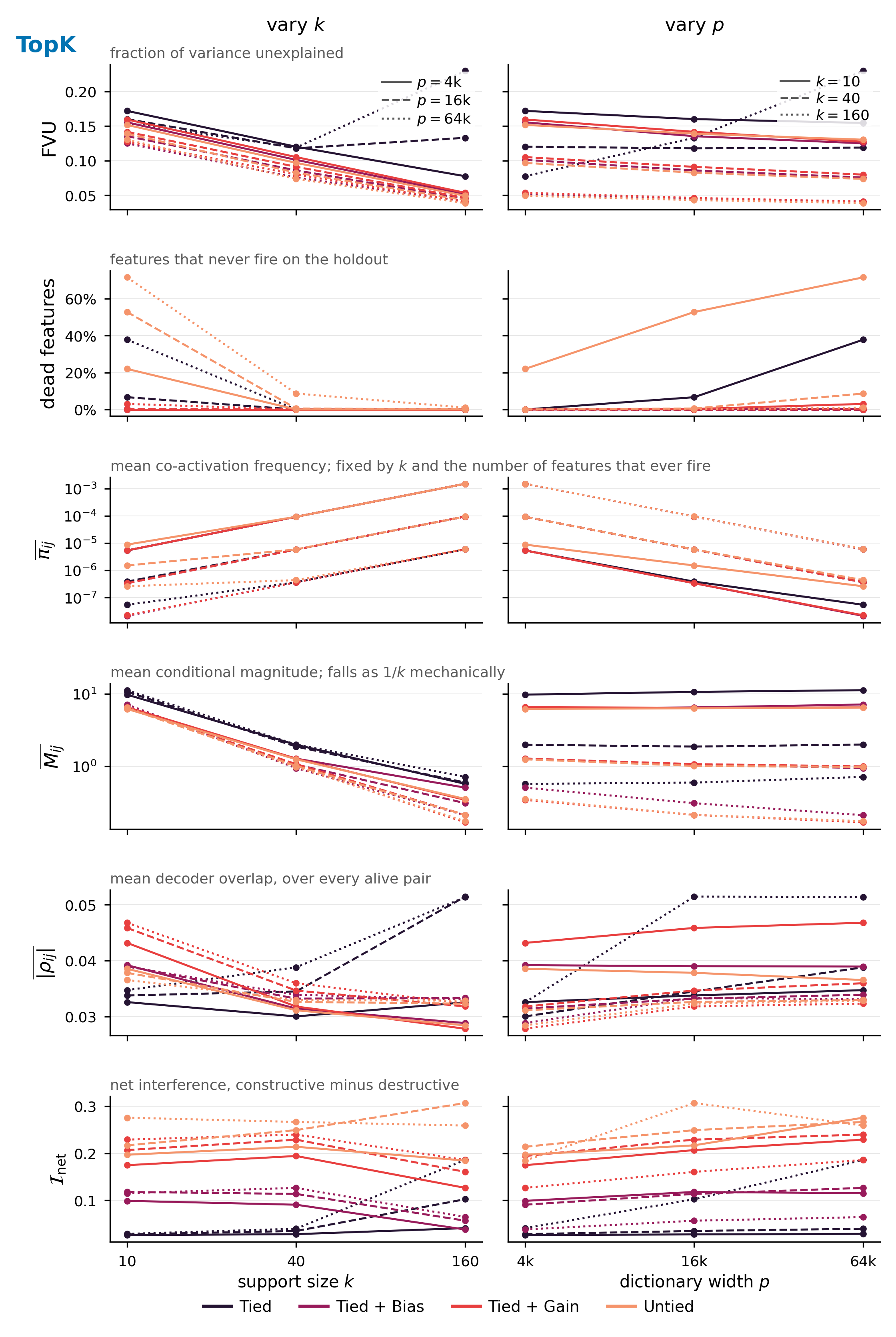}
    \caption{\textbf{Interference across sparsity and dictionary width.} TopK SAEs trained on Pythia-160M, varying constraints, support size $k$, and width $p$. \textbf{Left:} varying $k$ \textbf{Right:} varying $p$.}
    \label{fig:topk-kp-all}
\end{figure}

\newpage
\subsection{Generality of the interference signatures}
\label{app:generality}

We test whether the interference signatures observed in the main experiments generalize beyond our training setup. We consider two settings: a different modality (MNIST; \Cref{app:mnist}) and independently trained language-model SAEs from SAEBench (\Cref{app:saebench}).

\subsubsection{A different modality: MNIST}
\label{app:mnist}

We train SAEs on MNIST with pixels scaled to $[0,1]$, using 50{,}000 training and 10{,}000 validation examples. Models have $p=1{,}000$ features and $k=10$, and are trained for 1{,}000 epochs with Adam at learning rate $10^{-4}$. Decoder columns are kept unit norm, and inputs are centered using the geometric median of the training set \citep{bricken2023monosemanticity,gao2025scaling,bussmann2024batchtopk}. JumpReLU uses an $\ell_0$-target penalty with coefficient $10$ \citep{rajamanoharan2024jumping}; Matryoshka uses nested groups of sizes $250$, $500$, $750$, and $1{,}000$ \citep{bussmann2025learning}.

\begin{figure}[H]
    \centering
    \includegraphics[width=\linewidth]{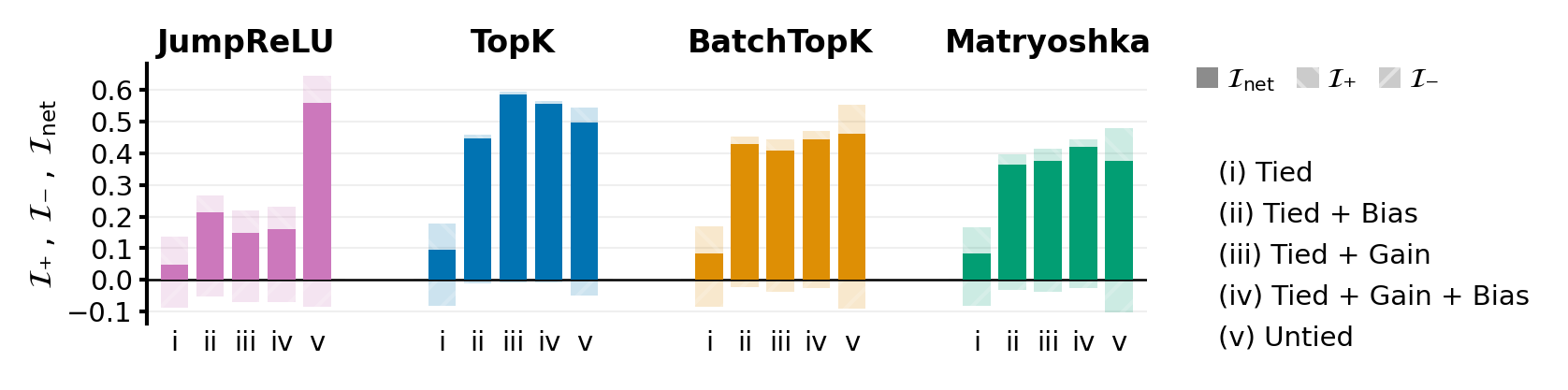}
    \caption{\textbf{Effective interference on MNIST.} The same qualitative pattern as in language models appears: interference is lowest in the most constrained tied setting and increases as architectural degrees of freedom are added. This figure complements~\Cref{fig:aggregate-interference}.}
    \label{fig:aggregate-interference-mnist-signed}
\end{figure}

\begin{figure}
    \centering
    \includegraphics[width=\linewidth]{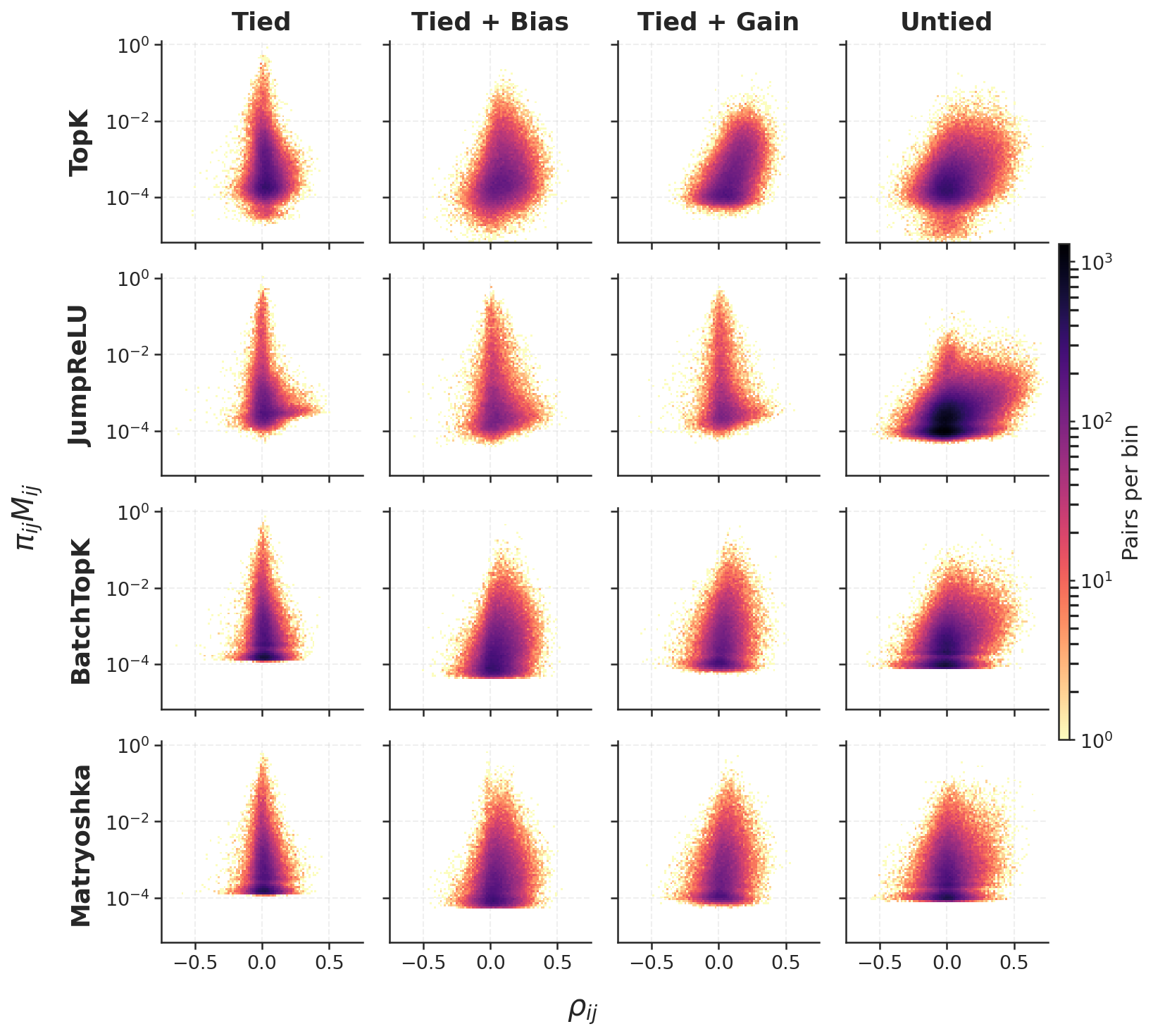}
    \caption{\textbf{Decoder orientation and co-activation on MNIST.} Relaxing the constraints broadens the distribution of decoder correlations and increases co-activation among non-orthogonal feature pairs. This figure complements~\Cref{fig:topk-architectural-regimes}.}
    \label{fig:architectural-regimes-mnist}
\end{figure}

\subsubsection{Independently trained SAEs: SAEBench}
\label{app:saebench}

\begin{wrapfigure}[16]{r}{0.6\textwidth}
    \centering
    \vspace{-7mm}
    \includegraphics[width=0.29\textwidth]{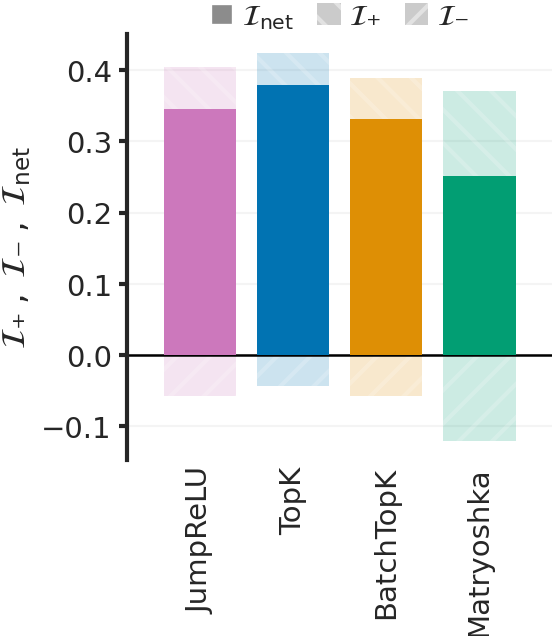}
    \hfill
    \includegraphics[width=0.29\textwidth]{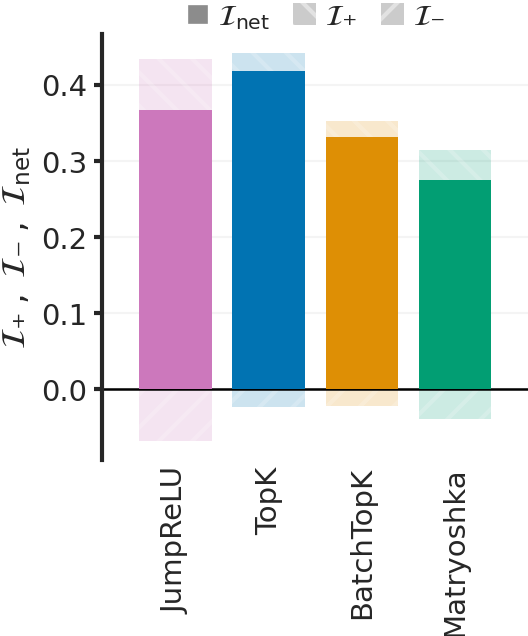}
    \caption{\textbf{Effective interference for independently trained SAEBench SAEs.}
    Gemma-2-2B layer 12 (left) and Pythia-160M layer 8 (right), across the four architectures.}
    \label{fig:saebench-signed}
\end{wrapfigure}

Our main experiments are deliberately controlled: all architectures are trained on the same model, data, width, sparsity, and optimization setup, allowing us to isolate the effect of architectural constraints. To test whether the same interference signatures persist beyond this controlled setting, we also evaluate independently trained SAEs released with SAEBench \citep{karvonen2025saebench}. We use $p=16{,}384$ checkpoints targeting $k=40$ for gemma-2-2b \citep{team2024gemma} layer 12 and Pythia-160M-deduped layer 8, evaluated using SAEBench's own OpenWebText pipeline.

These SAEs differ from ours in several aspects of the training recipe: they are trained for longer, use learning-rate warmup and decay, learn the pre-encoder bias rather than fixing it to the geometric median, and, for the hard-sparsity architectures, include an auxiliary loss designed to revive dead features. They are also available only in the untied setting, so their absolute interference levels are not directly comparable to those in our controlled sweep. Despite these differences, the same qualitative signatures remain visible across both model families (\Cref{fig:saebench-pairs}). In particular, the SAEBench models exhibit substantial effective interference (\Cref{fig:saebench-signed}), generally more than we observe in our own untied runs, suggesting that additional optimization and training choices in these pipelines may themselves contribute to effective interference.

\begin{figure}
    \centering
    \includegraphics[width=\linewidth]{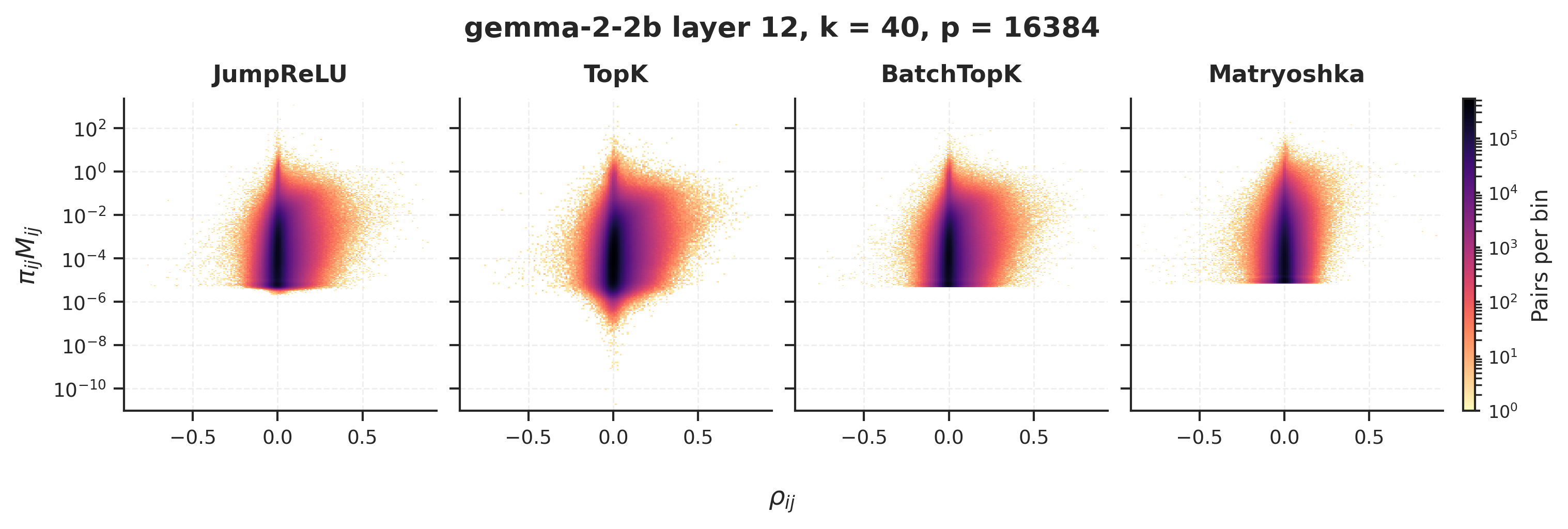}

    \vspace{0.5em}

    \includegraphics[width=\linewidth]{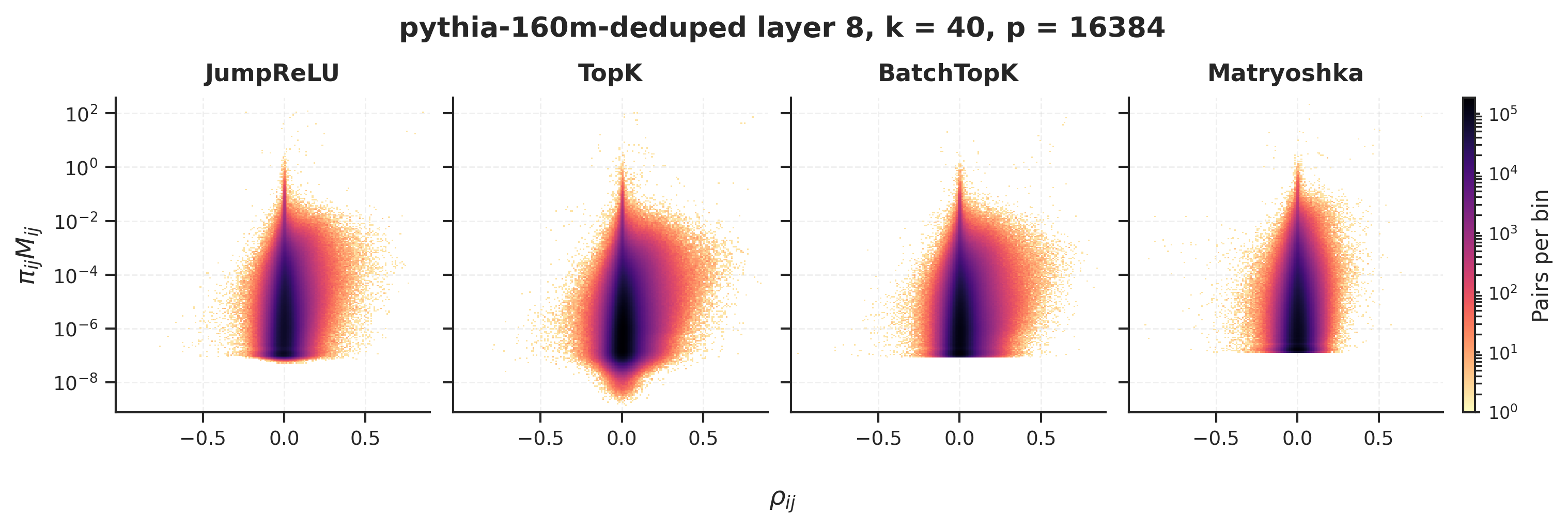}
    \caption{\textbf{Decoder orientation and co-activation for independently trained SAEBench SAEs.}
    Gemma-2-2B layer 12 (top) and Pythia-160M layer 8 (bottom).}
    \label{fig:saebench-pairs}
\end{figure}

\end{document}